\documentclass[conference, a4paper]{IEEEtran}
\IEEEoverridecommandlockouts

\usepackage{amsthm}
\newtheorem{theorem}{Theorem}
\newtheorem{remark}{Remark}

\newtheorem{lemma}{Lemma}
\usepackage{subfigure} 
\usepackage{algorithm}
\usepackage{url}
\usepackage{mathtools}
\usepackage{float}
\usepackage{dsfont} 
\usepackage{cite}
\usepackage{amsmath,amssymb,amsfonts}
\allowdisplaybreaks
\usepackage{algorithmic}
\usepackage{graphicx}
\usepackage{textcomp}
\usepackage{xcolor}
\theoremstyle{definition}

\newcommand{\remove}[1]{}
\usepackage{mdframed} 
\usepackage{etoolbox}
\usepackage[a4paper,top=1.9cm,bottom=4.4cm,left=1.35cm,right=1.35cm]{geometry}



\def\BibTeX{{\rm B\kern-.05em{\sc i\kern-.025em b}\kern-.08em
    T\kern-.1667em\lower.7ex\hbox{E}\kern-.125emX}}
\begin{document}

\title{Recommendation aided Caching using Combinatorial  Multi-armed Bandits
}

\remove{
\title{An UCB-based Algorithm for Caching with Recommendation
}
}
\author{\IEEEauthorblockN{ Pavamana K J}
\IEEEauthorblockA{\textit{Department of Electronics System Engineering} \\
\textit{Indian Institute of Science}\\
Bengaluru, India \\
pavamanak@iisc.ac.in }
\and
\IEEEauthorblockN{ Chandramani Kishore Singh}
\IEEEauthorblockA{\textit{Department of Electronics System Engineering} \\
\textit{Indian Institute of Science}\\
Bengaluru, India \\
chandra@iisc.ac.in }
}

\maketitle

\begin{abstract}
We study content caching with recommendations in a wireless network where the users are connected through a base station equipped with a finite-capacity cache.  We assume a fixed set of contents with  unknown user preferences and content popularities. The base station can cache a subset of the contents and can also recommend subsets of the contents to different users in order to encourage them to request the recommended contents. Recommendations, depending on their acceptability, can thus be used to increase  cache hits. We first assume  that the users' recommendation acceptabilities are known and 
formulate the  cache hit optimization problem as a combinatorial multi-armed bandit (CMAB). We 
propose a 
UCB-based algorithm to decide which contents to cache and recommend and provide an upper bound on the regret of this algorithm. Subsequently, we consider a more general scenario where the users' recommendation acceptabilities are also unknown and propose another  
UCB-based algorithm that learns these as well. We numerically demonstrate the performance of our algorithms and compare these to state-of-the-art algorithms.

\end{abstract}

\begin{IEEEkeywords}
caching, recommendation, CMAB, cache hit, UCB
\end{IEEEkeywords}

\section{Introduction}

As various intelligent devices, e.g., smart vehicles, home appliances, and mobile devices continue to evolve alongside a plethora of innovative applications, \remove{such as real-time news updates, high-definition video streams, and software updates,} mobile wireless communications are witnessing an unprecedented surge in traffic. This surge is characterized by a significant amount of redundant and repeated information, which limits the capacity of both the fronthaul and backhaul links. \remove {Based on the findings in Ericsson's report \cite{ericssonmobilityreport}, it is projected that global mobile data traffic will escalate to 325EB per month by 2028, marking a nearly fourfold increase compared to 2022} Utilizing content caching at the network edge, including base stations, presents a promising strategy for addressing the rapidly growing demand for data traffic and enhancing user satisfaction.

Recently, recommendation mechanisms have demonstrated their effectiveness in enhancing the caching performance of conventional networks. By reshaping users' content request behaviors, these mechanisms encourage edge caches to become more user-friendly and network-friendly \cite{chatzieleftheriou2017caching}. Notably, studies by \cite{zhou2010impact} and \cite{krishnappa2015cache} revealed that 50\% of YouTube video views are driven by recommendation systems, a figure that rises to 80\% for Netflix \cite{gomez2015netflix}. However, independently deploying recommendation and caching strategies may not fully leverage their potential. This is because recommendations influence users' content request behaviours, subsequently impacting caching decisions.


In our problem, we model caching with recommendations as a Combinatorial Multi-Armed Bandit (CMAB). In the CMAB framework, an agent selects a subset of arms in each round and receives the sum of the rewards associated with them. The goal is to maximize this cumulative reward over time. In our context, each  content corresponds to an arm in the CMAB, and we must decide which contents to cache and which to recommend to maximize the total rewards.

However, the users' preference distribution i.e., the distribution of content requests without recommendations is unknown and cannot be directly learned from observed requests, as recommendations influence users' content request behaviors, subsequently impacting caching decisions. Furthermore, caching decisions also influence recommendations. Therefore, it is necessary to jointly solve the recommendation and caching problems. We propose a UCB-based algorithm for both scenarios: when the users' recommendation acceptabilities are known and when they are unknown. 

\subsection{Related Works}
In recent times, numerous research studies have suggested leveraging content recommendations to enhance caching efficiency. In \cite{chatzieleftheriou2018jointly} and \cite{chatzieleftheriou2017caching}, recommendations are used to guide user requests towards cached content that aligns with their interests. Furthermore, recent developments have explored employing content recommendation to fulfill content requests by offering alternative or related content, as discussed in \cite{sermpezis2018soft}, \cite{song2018making}, and \cite{costantini2020approximation}. Considering the advantages of having recommendations in caching, there have been a lot of works on joint content caching and recommendation \cite{tsigkari2022approximation},  \cite{chatzieleftheriou2017caching} and \cite{chatzieleftheriou2018jointly}. In \cite{chatzieleftheriou2017caching}, it was initially demonstrated that user preferences could be reshaped to enhance caching performance. Subsequent studies have further confirmed these findings, showing notable improvements in performance compared to traditional caching methods that do not integrate recommendation systems.

Most of the above works assume that users’ preference
distributions are known apriori. Moreover, they all also assume that users’ recommendation acceptances are known.


\paragraph*{\textbf{Our Contributions}} We offer a new view point to the joint caching and recommendation problem.  Unlike most of the existing works we let the users' preference distributions be unknown and propose an online learning-based solution. In particular, we frame the cache hit optimization problem as a combinatorial multi-armed bandit (CMAB) and give an algorithm to learn these distributions. Since the users' preferences are influenced by recommendations, we develop an estimation framework that only observes cached contents' requests, making the problem more challenging than the classical full-observation settings. Ours is the first work to consider unknown  recommendation  acceptabilities of the users. We give a UCB-based algorithm to learn these effectively.

\paragraph*{Organisation of the paper} The rest of the paper is organised as follows. In Section \ref{System Model} we present the system model for content caching with recommendation problem and formulate it as CMAB. In Section \ref{Algorithm Design}, we describe the algorithm to solve the problem and also provide upper bound on the regret. In Section \ref{Sec:Unk-W}, we propose another algorithm when users' recommendation acceptabilities are unknown. In Section \ref{Numerical Results}, we show the cache hit performance of our algorithm and also compare it with few existing algorithm. Section \ref{Conclusion and Future work} concludes the section with some insight on future directions.

\section{System Model and Caching Problem} \label{System Model}
We consider a wireless network where the users are connected to a single base station~(BS) which in turn is connected to content servers via the core network.
The content providers have a set of $N$ contents, ${\cal N} = \{1,2,\cdots,N\}$ of equal size at the servers.  There are $U$ users; ${\cal U} = \{ 1,2, 3, ..., U\}$ denotes the set of users.
The BS has a {\it cache} where it can store up to $C$ contents can recommend up to $R$ contents to each user where $R \leqslant C$. Different sets of contents can be recommended to different users.The reason for choosing $R \leq C$ is that users may ignore the recommendation if we recommend too many items. \remove{ We  consider a recommendation policy to improve the {\it cache hit} performance.} 

We assume a slotted system.  The caching and recommending decisions are taken at the slot boundaries. We denote the caching vector by $\mathbf{Y} = \{ y_1, y_2, y_3,...., y_N  \}$ where $ y_i \in \{ 0,1 \}. $ $y_i = 0$ implies that the $i^{th}$ content is not cached and $y_i = 1$ implies that it is cached. Similarly, we denote the recommending vector for user $u $ by $\mathbf{X_u} = \{x_{u1} ,x_{u2},...,x_{uN}   \}$ where $x_{ui} \in \{ 0,1 \}$. If $x_{ui} = 1 $ then the $i^{th}$ content is recommended to user $u$, else it is not recommended to user $u$.

We aim to employ a recommendation strategy to enhance {\it cache hit} performance. A cache hit happens whenever the requested content is stored in the cache. \remove{\textcolor{red}{What is meant by performance?}} We assume that the base station (BS) does not know the users' preferences, and the BS controls the recommendation system. The relationship between the recommendations and the user preferences is modelled as follows. We denote $p_u^{\text {pref }}(i)$ to be the probability that user $u$ would request file $i$  in the absence of recommendations. We denote  $\mathcal{R}_u$ to be the set of files recommended to user $u$. We assume that the recommendations to user $u$ induce a preference distribution $p_u^{\text{rec}}(\cdot)$; $0 \leq p_u^{\text {rec }}(i) \leq 1, \forall i$, $p_u^{\text {rec }}(i)=0, \forall i \notin \mathcal{R}_u$, and $\sum_{i=1}^R p_u^{\text{rec}}(i)=1$. By jointly considering $p_u^{\text {pref }}(\cdot)$ and $p_u^{\text {rec }}(\cdot)$, the request probability of user $u$ for file $i$ is then modeled as
\begin{equation} \label{Request Distribution}
   p_u^{\text {req }}(i)=w_u^{\text {rec }} p_u^{\text {rec }}(i)+\left(1-w_u^{\text {rec }}\right) p_u^{\text {pref }}(i).
\end{equation}
Here $0 \leq w_u^{\mathrm{rec}} \leq 1$ is the probability that user $u$ accepts the recommendations from the recommendation system. From \eqref{Request Distribution}, we can see that the final request distribution of the user is a convex combination of its original preference distribution and the preference distribution induced by the recommendations. Thus, the recommendation system shapes the probability distribution for file-requesting. We can see from \eqref{Request Distribution} that the request probabilities for the recommended contents are boosted and the request probabilities for non-recommended contents are decreased.  There are other ways of combining $p_u^{\text {rec}}(i)$ and $p_u^{\text{pref}}(i) $, e.g., $p_u^{\text {req }}(i)= \max\{p_u^{\text{rec }}(i), p_u^{\text{pref }}(i)\}$~\cite{verhoeyen2012optimizing}, but a convex  combination is the most general way of combining and  has been used in most of the related works \cite{chatzieleftheriou2018jointly ,tsigkari2022approximation , xie2024joint,fu2022joint,fu2021caching}.\remove{\textcolor{red}{cite all the works that have used it}.  }


In this work, we assume that the users select contents from the recommended list uniformly, i.e.,
\begin{equation}
\label{Recommending Distribution}
p_u^{\text {rec }}(i) = \begin{cases}
1/ R  & \text{if } x_{ui} = 1, \\
0 &  \text{otherwise.} \\
\end{cases}
\end{equation}
\remove{\textcolor{red}{Can we relax this assumption? If yes, we must mention this.}} This uniform distribution over recommended content has also been considered in several other works \cite{tsigkari2022approximation,chatzieleftheriou2017caching}. We can also consider the case where positions of contents in the recommended list govern thier probabilities of getting selected by the user, e.g., the contents at the top have higher probabilities of getting selected than those at the lower positions~\cite{qi2018optimizing, zhou2010impact, krishnappa2015cache,zheng2020cooperative}\remove{(see \textcolor{red}{citations} for such models)}. In Remark \ref{Remark:Zipf}, we argue that our analysis also applies to this more general case when position in the recommended list impacts the request probability.

Furthermore, $w_u^{\mathrm{rec}}$ captures the percentage of time user $ u$ accepts the recommendation. It has been reported that $50\% $ and $80\%$  of the requests come from recommendations on YouTube and Netflix, respectively \cite{zhou2010impact,krishnappa2015cache,gomez2015netflix}. But, it can, of course, differ across the users. Also, $w_u^{\mathrm{rec}}$ can change over time depending on the user's interest. It is possible to estimate $w_u^{\mathrm{rec}}$ when these are unknown. We will also consider the case where $w_u^{\mathrm{rec}}$'s are unknown.

\subsection{Content Caching Problem}

In this section, we formulate the optimal caching and recommendation problem as a CMAB problem aiming at maximizing the cache hits. \remove{ \textcolor{red}{This should come earlier, where cache hit is introduced.}}The sum of the cache hit probabilities across all the users is given by
\begin{align}
& \sum_u \sum_i y_i p_u^{\text {req }}(i) \nonumber \\
& = \sum_u \sum_i y_i \bigg[w_u^{\text{rec}} \frac{x_{ui}}{R} + ( 1 - w_u^{\text{rec}}) p_u^{\text{pref}}(i)\bigg], \label{eqn:objective}
\end{align} 
where~\eqref{eqn:objective} follows from the request distributions~\eqref{Request Distribution} and Assumption~\eqref{Recommending Distribution}. 
So, our goal is to solve the following optimization problem to get the optimal caching and  recommendations
\begin{align}
    \max_{\mathbf{X}, \mathbf{Y}} \sum_u \sum_i y_i & \bigg[w_u^{\text{rec}} \frac{x_{ui}}{R} + ( 1 - w_u^{\text{rec}}) p_u^{\text{pref}}(i)\bigg] \label{Optimization Problem}\\
    \text{subject to} \quad  & \sum_i y_i \leq C  \label{Constraint 1}   \\
     &  \sum_i x_{u i} \leq R \quad \forall u \in U \label{Constraint 2} \\
     & x_{ui} \in \{ 0,1\} , y_i \in \{ 0,1\} \quad \forall u,i \label{Constraint 3}
\end{align}
 Constraint~\eqref{Constraint 1} accoutns for the cache capacity, and~\eqref{Constraint 2} for the maximum number of recommendations to a user. Constraint~\eqref{Constraint 3} indicates that contents cannot be partially cached or recommended\remove{~(see \textcolor{red}{citations} where the authors consider partial caching and recommendations).}

In Problem \eqref{Optimization Problem}, user preference distributions in the absence of recommendation, $p_u^{\text{pref}}(i)$ for all $u, i$, are unknown. One  way of solving this problem is to first estimate $p_u^{\text{pref}}(i)$ for all $u, i$, and then solve the optimization problem. But recommendations affect the numbers of requests for different contents, so we cannot directly estimate $p_u^{\text{pref}} (i)$. We can only estimate the request distributions $p_u^{\text{req}}(i)$. Also, we can only observe the requests of the cached contents and hence it is modeled as CMAB( in CMAB , only the rewards of the pulled arms are observed ). This is different from full observation setting \remove{\cite{bura2021learning}} where we can observe the requests of all the contents irrespective of whether those are cached or not. So in the next section, we introduce a UCB-based algorithm that estimates $p_u^{\text{req}}(i)$  for all $u, i$, and solves Problem~\eqref{Optimization Problem}.

\section{Algorithm Design} \label{Algorithm Design}
This section introduces an UCB-based algorithm, Algorithm~\ref{alg:ucb-rec}, to solve Problem~\eqref{Optimization Problem}.  
\remove{
Using Assumption~\eqref{Recommending Distribution}, we can simplify the objective function~\eqref{Optimization Problem} as follows

\begin{align}
&\max_{\mathbf{X}, \mathbf{Y}} \sum_u \sum_i y_i  \bigg[x_{u i}\bigg(w_u^{\text{rec}} p_u^{\text{rec}}(i)+(1-w_u^{\text{rec}}) p_u^{\text{pref}}(i)\bigg)+  \nonumber\\
    &  \quad \quad \quad \quad \quad \quad \quad \left(1-x_{u i}\right)\left(1-w_u^{\text{rec}}\right) p_u^{\text{pref}}(i)\bigg] \nonumber \\
 &= \max _{\mathbf{X}, \mathbf{Y}} \sum_u \sum_i y_i \bigg[x_{u i} w_u^{\text {rec}} p_u^{\text {rec }}(i)+\left(1-w_u^{\text {rec }}\right) p_u^{\text {pref }}(i)\bigg] \nonumber \\
    & = \max _{\mathbf{X}, \mathbf{Y}} \sum_u \sum_i \frac{y_i x_{u i} w_u^{\text {rec }}}{R}+\sum_i \sum_u y_i\left(1-w_u^{\text {rec }}\right) p_u^{\text {pref }}(i)  \label{Simpified Optimization Problem}
\end{align}
}
On  a careful look at  Problem~ \eqref{Optimization Problem} we see that, for given $y_i$s, the first term is maximized when $x_{ui} = 1$ only for those $i$ for which $y_i = 1$. Hence, we should recommend cached contents only. Moreover, under this choice of $x_{ui}$s, the first term is independent of the choice of $y_i$s. To maximize the second term, we choose $Y^*$ as follows
\begin{align} \label{eqn: Cache Solution}
    \quad \mathbf{Y}^*=\underset{\mathbf{Y}}{\operatorname{argmax}} \sum_i y_i \sum_u\left(1-w_u^{\text {rec }}\right) p_u^{\text {pref }}(i).
\end{align}
Unfortunately, $p_u^{\text {pref }}(i)$ are unknown and we cannot estimate these as we explained in the previous section. If we knew $p_u^{\text {pref }}(i)$, the optimal solution would be to choose top $C$ contents with the highest  $\sum_u\left(1-w_u^{\text {rec }}\right) p_u^{\text {pref }}(i)$. 
\remove{
, i.e.
\begin{align}
    \mathbf{Y}^* &= \underset{C}{\operatorname{argmax}} \left\{ \sum_u p_u^{\text{pref}}(1) , \sum_u p_u^{\text{pref}}(2) ,..., \sum_u p_u^{\text{pref}}(N) \right\} \label{Optimal Solution 1}
\end{align} 
}
Let us denote the optimal set of contents to be cached by $\mathcal{Y}=\left\{i: y_i^*=1\right\}$ and the set of contents cached at time $t$ according to  Algorithm~\ref{alg:ucb-rec} by $y(t)=\left\{i: y_i^t=1\right\}$. Now we define the regret upto time $T$ for not caching the optimal contents $\mathcal{Y}$ as follows
\begin{align}
\operatorname{R}(T) & = T \sum_u \sum_i y_i^* p_u^{\text {req }}(i) - \sum_{t=1}^T \sum_u \sum_{i} y_i^t p_u^{\text {req }}(i) \nonumber \\
&  = T \sum_{i \in \mathcal{Y}} \sum_u p_u^{\text {req }}(i)-\sum_{t=1}^T \sum_{i \in y(t)} \sum_{u} p_u^{\text {req }}(i)  \nonumber\\
&= T \sum_{i \in \mathcal{Y}} p_i-\sum_{t=1}^T \sum_{i \in y(t)} p_i \label{Regret:Definition}
\end{align}
where $p_i \coloneqq \sum_u p_u^{\text {req }}(i)$. If we knew $p_i$s for all $i$, the optimal solution $\mathcal{Y}$ could also be obtained as following: arrange the contents in the decreasing order of $p_i$s and choose the top $C$ contents.
\remove{
\begin{align}
    \mathcal{Y} &= \underset{C}{\operatorname{argmax}} \left\{ \sum_u p_u^{\text{req}}(1) , \sum_u p_u^{\text{req}}(2) ,..., \sum_u p_u^{\text{req}}(N) \right\} \label{ Optimal Solution 2}
\end{align}
}
In the absence of knowledge of $p_i$'s,  we  estimate  these for all $i$ at each time.   Let $Z_i(t)$  denote  the number of request for a cached content  $i$
     at time $t$. Recall that we can observe requests only for cached contents. So we set $$Z_i(t) = 0 \quad \forall i \notin y(t).$$ We also define
\begin{equation}
    p_i(t) =  \frac{{\sum_{s = 1}^{t}Z_i(s)}}{ n_i(t)}  \label{Eq: Estimation}.
\end{equation}
where $n_i(t)$  is the number of times content  $i$  is cached.UCB-based algorithm  construct the UCB index $U_i(t)=p_i(t)+D_i(t)$ for each content $i$,
where $D_i(t)$ is the confidence interval quantifying the uncertainty in $p_i(t)$ at time $t$. The key step in our analysis is ingenious choice of confidence interval (see \eqref{Confidence Interval}). We denote by $\bar{w}^{\text{rec}}$ the mean of $w_u^{\text{rec }}$'s over all the users.
The confidence interval is defined as follows 
\begin{align} \label{Confidence Interval}
    D_i(t)= U \sqrt{\frac{2\left( 1 - \bar{w}^{\text{rec}}\right)^{1/\eta} \log t}{n_i(t)}+\frac{\alpha \bar{w}^{\text{rec}}}{2 n_i(t)}} .
\end{align}
The UCB-based algorithm chooses the top $C$ contents with the largest indices and recommends any $R$ contents out of these.

\begin{algorithm}[H]
\caption{UCB with Recommendation}\label{alg:ucb-rec}
\begin{algorithmic}[1]
\REQUIRE $C$: Cache capacity, $R$: Maximum number of recommended contents,\\
Set of contents $\mathcal{N} = \{1,2,3,...,N\}$
\ENSURE Updated statistics $n_i(t)$ and $p_i(t)$
\STATE Initialization: $n_i(0)=1 ,p_i(0)=0  \quad \forall i \in \mathcal{N}$
\FOR{$t=1,2, \ldots, T$}
    \STATE Compute UCBs $$U_i(t)=p_i(t)+D_i(t)$$ for all $i \in \mathcal{N}$ using (\ref{Eq: Estimation}) and (\ref{Confidence Interval}).
    \STATE Pick Top $C$ contents with the highest UCBs to cache.\\
    Call this set $y(t)$
    \remove{
    $$C(t)=\underset{C}{\operatorname{argmax}}\left\{U_k(t): \forall k\right\}$$
    }
    \STATE For each user, pick any $R$ contents 
    from the cached \\ contents to recommend
    \STATE Observe the number of requests for cached contents and call \textbf{Algorithm \ref{alg:update-stats}}
\ENDFOR
\end{algorithmic}
\end{algorithm}

\begin{algorithm}
\caption{Subroutine:Update Statistics}\label{alg:update-stats}
\begin{algorithmic}[1]
\STATE \textbf{Input:} Cached contents $y(t)$, observed requests $Z(t)$
\STATE \textbf{Output:} Updated statistics $n_i(t)$ and $p_i(t)$
\FORALL{$i \in y(t)$}
    \STATE Update $n_i(t) \gets n_i(t-1) + 1$
    \STATE Update $p_i(t) \gets \frac{p_i(t-1)n_i(t-1) + Z_i(t)}{n_i(t)}$
\ENDFOR
\FORALL{$i \notin y(t)$}
    \STATE Set $n_i(t) \gets n_i(t-1)$ and $p_i(t) \gets p_i(t-1)$
\ENDFOR
\end{algorithmic}
\end{algorithm}

\begin{remark} \label{Remark: CI Explanation}
    We have chosen confidence interval as in \eqref{Confidence Interval} because we want these to be low when $\bar{w}^{\text{rec}} \approx 1$ and high when $\bar{w}^{\text{rec}} \approx 0$. This is because the users tend to accept the recommendations when $\bar{w}^{\text{rec}} \approx 1$. Since we know the distribution over recommended contents, which is the uniform distribution, there is less uncertainty about $p_i$'s when $\bar{w}^{\text{rec}} \approx 1$. Similarly, the users ignore the recommendation when $\bar{w}^{\text{rec}} \approx 0$. In this case, the users follow their preference distributions $p_u^{\text{pref}}(i)$ ignoring the recommendations. Then, Problem~\eqref{Optimization Problem} becomes standard CMAB without recommendation. Our algorithm is formally presented as Algorithm~\ref{alg:ucb-rec} below.
\end{remark}



\begin{remark} \label{Remark:Zipf}
  We have assumed the recommendation induced preference distributions $p_u^{rec}$'s to be uniform distributions~(see~\eqref{Recommending Distribution}). However, the proposed UCB-based algorithm and the regret bound  also apply to more general induced preference distributions wherein preference for a content depends on the number of recommended contents $R$ and the tagged content's position in the recommendation list. The key observation is that the optimal set of cached contents continues to be characterized by~\eqref{eqn: Cache Solution}. Several works have modeled the induced preference distributions as  Zipf distributions, given by~(see~\cite{qi2018optimizing, zheng2020cooperative}).  \begin{equation}\label{Eqn: Zipf Distr}
    p_u^{\text {rec }}(i) = \frac{ \sum_{k = 1}^{R} l_{uik }k^{-\beta_u} }{ \sum_{j = 1}^{R} j^{-\beta_u}}.
   \end{equation} 
   where $\beta_u$ is the Zipf distribution parameter associated with user $u$ and $l_{uik } \in \{0,1\}$ 
   are defined as follows.
   \begin{equation*}
   l_{uik}  = \begin{cases}
      1 & \text{if  content $i$ is at the $k^{\text{th}}$ position in user $u$'s} \\
      & \text{recommendation list,} \\
      0 & \text{otherwise.}
      \end{cases}
      \end{equation*}
      For $\beta_u = 0$ Zipf distributions
      reduce to the uniform distribution 
in~\eqref{Recommending Distribution}. In Section~\ref{Numerical Results}, we evaluate  the \it{cache hit} performance of the proposed algorithm for both uniform and Zipf distributions.
\end{remark}

\begin{remark}
    Our proposed algorithm suggests that we recommend only the cached contents. In general, recommendations to users should also depend on their preferences. In particular, they are expected not to be far from the users' 
    preferences. This can be captured through the following constraints on recommendations~(see~\cite{9424982}). 
    \begin{equation} \label{eqn: Recommendation Quality}
        \sum_{i \in \mathcal{N}} x_{ui} p_u^{pref}(i) \geq Q_u, \quad \forall u \in \mathcal{U},
    \end{equation}
   where $Q_u$'s are user-specific thresholds. The quantities on the left in~\eqref{eqn: Recommendation Quality} are referred to as recommendation qualities. Incorporating these constraints in recommendation decisions is a part of our future work. 
\end{remark}










The following Lemma \ref{Lemma 1} offers equivalent definition of regret $\operatorname{R}(T)$ and is used in the proof of Theorem \ref{Theorem: Regret Results}.Let us define, $\hat{p_i} \coloneqq \sum_u\left(1-w_u^{\text{rec}}\right) p_u^{\text {pref }}(i)$, $\Delta_{e,k} \coloneqq \hat{p_e} - \hat{p_k} \quad \forall e\in \mathcal{Y} , k \in \mathcal{N} \setminus \mathcal{Y}$.

\begin{lemma} \label{Lemma 1}The regret $\operatorname{R}(T)$ \eqref{Regret:Definition} can also be written as follows
   \begin{align*}
        \operatorname{R}(T) = T \sum_{i \in \mathcal{Y}} \hat{p_i}-\sum_{t=1}^T \sum_{i \in y(t)} \hat{p_i}
    \end{align*}
\end{lemma}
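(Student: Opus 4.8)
The plan is to start from the regret definition~\eqref{Regret:Definition}, substitute the request distribution~\eqref{Request Distribution} together with the uniform recommendation assumption~\eqref{Recommending Distribution}, and exploit the structural observation made just before~\eqref{eqn: Cache Solution}: both the optimal policy and Algorithm~\ref{alg:ucb-rec} recommend only cached contents, and recommend exactly $R$ of them to each user. Exactly $R$ is forced because the uniform distribution~\eqref{Recommending Distribution} normalizes to one only when $R$ contents are recommended, and $R\le C$ guarantees that $R$ cached contents are always available. For any caching vector $\mathbf{Y}$ with an associated recommendation vector $\mathbf{X}$ supported on cached contents, I would write the cumulative cache hit as
\begin{align*}
\sum_u \sum_i y_i p_u^{\text{req}}(i) = \sum_u \sum_i y_i w_u^{\text{rec}} \frac{x_{ui}}{R} + \sum_u \sum_i y_i \left(1 - w_u^{\text{rec}}\right) p_u^{\text{pref}}(i).
\end{align*}

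The key computation is that the first (recommendation) term is a constant independent of the caching decision. Since every recommended content is cached, $x_{ui}=1$ forces $y_i=1$, so $y_i x_{ui}=x_{ui}$; and since exactly $R$ contents are recommended to each user, $\sum_i y_i x_{ui}=\sum_i x_{ui}=R$. Hence the recommendation term collapses to $\sum_u w_u^{\text{rec}} \tfrac{1}{R} R = \sum_u w_u^{\text{rec}}$, while by the definition of $\hat{p_i}$ the second term equals $\sum_{i:\, y_i=1}\hat{p_i}$. Applying this to the optimal cache $\mathcal{Y}$ and to the time-$t$ cache $y(t)$ gives
\begin{align*}
\sum_u \sum_i y_i^* p_u^{\text{req}}(i) &= \sum_u w_u^{\text{rec}} + \sum_{i \in \mathcal{Y}} \hat{p_i}, \\
\sum_u \sum_i y_i^t p_u^{\text{req}}(i) &= \sum_u w_u^{\text{rec}} + \sum_{i \in y(t)} \hat{p_i}.
\end{align*}

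Substituting these into~\eqref{Regret:Definition}, the benchmark contributes $T\sum_u w_u^{\text{rec}}$ and the algorithm contributes $\sum_{t=1}^T \sum_u w_u^{\text{rec}} = T\sum_u w_u^{\text{rec}}$, so the two constant terms cancel, leaving $\operatorname{R}(T) = T\sum_{i\in\mathcal{Y}}\hat{p_i} - \sum_{t=1}^T\sum_{i\in y(t)}\hat{p_i}$, as claimed. The only delicate point is the invariance of the recommendation term across the optimal policy and every round of the algorithm; this rests entirely on the two facts that recommendations are confined to cached contents and that exactly $R$ contents are recommended each round. Once this invariance is pinned down, the cancellation of the $\sum_u w_u^{\text{rec}}$ terms---and hence the equivalence---is immediate.
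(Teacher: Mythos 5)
Your proposal is correct and follows essentially the same route as the paper's own proof: substitute the request model~\eqref{Request Distribution} with the uniform recommendation distribution~\eqref{Recommending Distribution} into the regret~\eqref{Regret:Definition}, observe that recommendations are confined to cached contents with exactly $R$ per user so that the recommendation term equals $\sum_u w_u^{\text{rec}}$ for both the benchmark and every round of the algorithm, and cancel these constants to leave the $\hat{p_i}$ expression. Your additional remark on why exactly $R$ contents must be recommended (normalization of the uniform distribution plus $R \le C$) is a minor elaboration of what the paper simply asserts, not a different argument.
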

\begin{proof} 

Please refer to our extended version \cite{j2024recommenadationaidedcachingusing}

\end{proof}

 The following theorem characterizes the regret of Algorithm~\ref{alg:ucb-rec}.

\remove{\textcolor{red}{Define $\Delta_{e,k}$ before stating the theorem. Move Definition~3 to the proof of the theorem.}}

\begin{theorem} \label{Theorem: Regret Results}
   The expected regret of Algorithm~\ref{alg:ucb-rec} is bounded as
    \begin{align*}
        E[ \operatorname{R}(T)] & \leqslant U^2 \sum_{k \in \mathcal{N} \setminus \mathcal{Y}} \left[ \frac{16\left( 1 - \bar{w}^{\text{rec}}\right)^{1/\eta} \log T}{\Delta_{\min,k} } + \frac{4\alpha \bar{w}^{\text{rec}}}{\Delta_{\min,k}} \right] \\
        & \quad +  \sum_{k \in \mathcal{N} \setminus \mathcal{Y}} \sum_{e \in \mathcal{Y}} \Delta_{e,k} \left( \frac{2e^{-\alpha \bar{w}^{\text{rec}}}}{4\left(1 - \bar{w}^{\text{rec}}\right)^{1/\eta}-1} \right)
    \end{align*}
for $ \Bar{w}^{\text{rec}} \leqslant 1-\frac{1}{4^\eta}$, where $\Delta_{\min,k} \coloneqq \min_{e \in \mathcal{Y}} \Delta_{e,k}$, $\alpha >0$, and $\eta > 0$.
\end{theorem}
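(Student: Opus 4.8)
The plan is to follow the standard UCB regret-decomposition scheme adapted to the combinatorial, recommendation-coupled setting. I would start from the equivalent regret expression of Lemma~\ref{Lemma 1}, $\operatorname{R}(T)=\sum_{t=1}^{T}\big(\sum_{i\in\mathcal{Y}}\hat{p_i}-\sum_{i\in y(t)}\hat{p_i}\big)$, which isolates the recommendation-independent quantities $\hat{p_i}$ that actually order the contents in the optimal cache $\mathcal{Y}$. Since $|\mathcal{Y}|=|y(t)|=C$, at every slot each suboptimal content $k\in y(t)\setminus\mathcal{Y}$ can be paired with a distinct optimal content $e\in\mathcal{Y}\setminus y(t)$ that it displaces, so the per-slot regret is at most $\sum_{k}\Delta_{e(k),k}$. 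Summing over $t$ and taking expectations reduces the problem to bounding $E[T_k]$, the expected number of slots in which a suboptimal content $k$ is cached, weighted by the gap, yielding the familiar form $E[\operatorname{R}(T)]\le\sum_{k\in\mathcal{N}\setminus\mathcal{Y}}\sum_{e\in\mathcal{Y}}\Delta_{e,k}\,E[\,\#\{k\text{ displaces }e\}\,]$.

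Next I would control $E[T_k]$ by the usual good-event/bad-event split. Content $k$ is cached at slot $t$ only if $U_k(t)\ge U_e(t)$ for the displaced $e$. On the good event that the empirical means lie within their confidence radii, i.e. $p_k(t)\le\hat{p_k}+D_k(t)$ and $p_e(t)\ge\hat{p_e}-D_e(t)$, this forces $2D_k(t)\ge\Delta_{e,k}\ge\Delta_{\min,k}$. Inverting the confidence-interval formula~\eqref{Confidence Interval}, $D_k(t)\ge\Delta_{\min,k}/2$ can hold only while $n_k(t)\le \ell_k:=\dfrac{U^2}{\Delta_{\min,k}^2}\big(8(1-\bar{w}^{\text{rec}})^{1/\eta}\log T+2\alpha\bar{w}^{\text{rec}}\big)$. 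Bounding the number of such suboptimal pulls by $\ell_k=O(U^2\log T/\Delta_{\min,k}^2)$ and weighting by the per-slot gap of order $\Delta_{\min,k}$ gives the leading logarithmic contribution $U^2\sum_k\big[\tfrac{16(1-\bar{w}^{\text{rec}})^{1/\eta}\log T}{\Delta_{\min,k}}+\tfrac{4\alpha\bar{w}^{\text{rec}}}{\Delta_{\min,k}}\big]$, the explicit constants $16$ and $4$ tracking the coefficients in~\eqref{Confidence Interval} and the $U^2$ arising from one factor of $U$ in the prefactor of $D_i(t)$ and another from $\hat{p_i}=\sum_u(\cdot)$ ranging over $U$ users.

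For the complementary bad event I would apply a Hoeffding-type concentration inequality to $p_i(t)$, which is an average of observations $Z_i(s)\in[0,U]$. With the radius $D_i(t)$ of~\eqref{Confidence Interval} the exponent evaluates to $\tfrac{2n_i(t)D_i(t)^2}{U^2}=4(1-\bar{w}^{\text{rec}})^{1/\eta}\log t+\alpha\bar{w}^{\text{rec}}$, so each deviation event at slot $t$ has probability at most $2e^{-\alpha\bar{w}^{\text{rec}}}\,t^{-4(1-\bar{w}^{\text{rec}})^{1/\eta}}$. Summing over $t$ via $\sum_{t\ge1}t^{-c}\le \tfrac{1}{c-1}$ with $c=4(1-\bar{w}^{\text{rec}})^{1/\eta}$ yields the constant-in-$T$ second term $\sum_{k}\sum_{e}\Delta_{e,k}\cdot\tfrac{2e^{-\alpha\bar{w}^{\text{rec}}}}{4(1-\bar{w}^{\text{rec}})^{1/\eta}-1}$; convergence of this $p$-series requires $c>1$, which is exactly the hypothesis $\bar{w}^{\text{rec}}\le 1-4^{-\eta}$. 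Adding the two contributions gives the stated bound.

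I expect the main obstacle to be justifying the concentration step itself. Unlike a vanilla bandit, the mean of $Z_i(t)$ when content $i$ is cached is not $\hat{p_i}$ but $\hat{p_i}$ plus a recommendation-dependent term $\sum_{u}w_u^{\text{rec}}x_{ui}/R$ that varies with the (arbitrary) choice of which $R$ cached contents are recommended. The delicate part will be showing that the two-term radius~\eqref{Confidence Interval} simultaneously absorbs this recommendation-induced variation (the $\alpha\bar{w}^{\text{rec}}$ term) and the genuine preference uncertainty (the $(1-\bar{w}^{\text{rec}})^{1/\eta}$ term), so that the UCB comparison still orders contents by $\hat{p_i}$ as required by Lemma~\ref{Lemma 1}; this is precisely the role of the design choice explained in Remark~\ref{Remark: CI Explanation}. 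Verifying that $\eta$ can be tuned to keep $c>1$ while preserving a valid concentration guarantee is the technical crux.
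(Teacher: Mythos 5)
Your overall architecture is the same as the paper's: the regret rewrite via Lemma~\ref{Lemma 1}, the pairing of each cached suboptimal content $k$ with a displaced optimal content $e$ (the paper formalizes this with the bijections $\pi^{y(t)}$ and indicators $\mathds{1}_{e,k}(t)$), the three-event UCB decomposition, Hoeffding bounds whose exponent $4(1-\bar{w}^{\text{rec}})^{1/\eta}\log t+\alpha\bar{w}^{\text{rec}}$ yields tail probabilities $e^{-\alpha\bar{w}^{\text{rec}}}t^{-4(1-\bar{w}^{\text{rec}})^{1/\eta}}$, and the integral/$p$-series summation that produces the constant term and forces the hypothesis $\bar{w}^{\text{rec}}\leq 1-4^{-\eta}$. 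The obstacle you flag at the end (the conditional mean of $Z_i(t)$ being $\hat{p}_i$ plus a recommendation-dependent shift) is real, and the paper handles it by concentrating not $p_i(t)$ but a compensated estimator $\hat{p}_i(t)$ that subtracts $\sum_u \frac{w_u^{\text{rec}}}{R}\mathds{1}\{s\in\tau_{i,u}\}$ from the observed requests; you correctly identify the issue but leave it open.

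The genuine gap is in your treatment of the logarithmic term. Your accounting ``number of suboptimal pulls $\leq\ell_k$, each weighted by a gap of order $\Delta_{\min,k}$'' is not a valid argument: when $k$ displaces $e$, the regret incurred in that slot is $\Delta_{e,k}$, which can exceed $\Delta_{\min,k}$ by an arbitrary factor, so weighting every pull by $\Delta_{\min,k}$ undercounts the regret (indeed it would produce constants $8$ and $2$, not the theorem's $16$ and $4$ --- a sign the bookkeeping is off). The opposite fix, bounding the displacements of each $e$ separately by its own threshold $M/\Delta_{e,k}^2$ and weighting by $\Delta_{e,k}$, gives $\sum_{e\in\mathcal{Y}} M/\Delta_{e,k}\leq CM/\Delta_{\min,k}$, i.e.\ a factor-$C$ blow-up. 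The paper closes exactly this hole in Lemma~\ref{Lemma :Regret 2}: it orders the gaps $\Delta_{e_1,k}\geq\cdots\geq\Delta_{e_C,k}$, uses the fact that the counts $n_{e_i,k}$ share the single counter $n_k$ so that $\sum_{i\leq j}n_{e_i,k}\leq M/\Delta_{e_j,k}^2$ for every $j$, identifies the worst-case allocation of these counts, and then invokes the telescoping-sum bound of Lemma~\ref{Lemma 2} (from Kveton et al.) to get $2M/\Delta_{\min,k}$ --- this is precisely where the factor $2$, and hence the constants $16$ and $4$, come from. Without this counting argument (or an equivalent one), your sketch does not establish the stated bound.
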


\begin{remark}
     If we use Algorithm \ref{alg:ucb-rec} without recommendations, i.e., setting $w_u^{\text{rec}} = 0$ for all $u$, then we get the standard CMAB. An upper bound on its regret can be derived following a similar approach as in~\cite[Theorem~2]{DBLP:journals/corr/KvetonWAEE14}. It turns out to be $$  \operatorname{R}(T) \leq \sum_{e \in \mathcal{N} \setminus \mathcal{Y}} \frac{16 U^2}{\Delta_{\min,e}} \log T+\sum_{k \in \mathcal{N} \setminus \mathcal{Y}} \sum_{e \in \mathcal{Y}} \Delta_{e, k} \frac{4}{3} \pi^2 .$$
     \remove{
     Proofs are excluded because similar steps follows.
     }
    Theorem~~\ref{Theorem: Regret Results} offers a lower regret bound.
    Expectedly, recommending a subset of popular contents while caching these contents helps in achieving lower regret.
\end{remark}

\begin{remark}
    The proof of Theorem \ref{Theorem: Regret Results} is motivated by~\cite[Theorem~2]{DBLP:journals/corr/KvetonWAEE14}. However, we use different confidence intervals, which warrant different arguments at several steps. 
\end{remark}

\begin{proof}
   Please refer to our extended version \cite{j2024recommenadationaidedcachingusing}
\end{proof}

\section{Unknown users’
recommendation acceptability $w_u^{\text{rec}}$}\label{Sec:Unk-W}

In this section, we delve into the intriguing scenario where the values of $w_u^{\text{rec}}$'s remain unknown. This uncertainty presents both a challenge and an opportunity for exploration. At each time step \( t \), we estimate the value of $\bar{w}^{\text{rec}}$, leveraging these estimates to compute the UCB indices for all contents.

In this framework, we make the simplifying assumption that the same set of contents is recommended to all users at each time step. This uniformity enables us to analyze the effectiveness of our recommendations in a controlled manner. To facilitate our discussion, we define the sets $\bar{\tau}_i^{t}$ and $\tau_i^{t}$ as follows:

\begin{align*}
    \tau_i^{ t} &= \left\{ s : s \leqslant t \text{ such that } i \text{ is cached and } \right. \\
    &  \left. \quad \quad \quad \quad \quad \text{recommended at time } s \right\} \\
    \bar{\tau}_i^{ t} &= \left\{ s : s \leqslant t \text{ such that } i \text{ is cached and } \right. \\
    & \left. \quad \quad \quad \quad \quad  \text{not recommended at time } s \right\}.
\end{align*}
We estimate $\bar{w}^{\text{rec}}$ as follows
\begin{equation} \label{Eqn:W_rec Est}
    \bar{w}^{\text{rec}}(t) = \frac{R}{U N}\sum_{i=1}^{N} \left( \Tilde{p}_i(t) - \hat{p}_i(t)\right)
\end{equation}
where 
\begin{align*}
    \hat{p}_i(t) &= \frac{\sum_{s:s \in \bar{\tau}_i^{t}} Z_i(s)}{|\bar{\tau}_i^{t}|},  \\
    \Tilde{p}_i(t) &= \frac{\sum_{s:s \in \tau_i^{t}} Z_i(s)}{|\tau_i^{t}|}. 
\end{align*}

Now we define the confidence interval as follows,
\begin{equation} 
\label{Eqn: New Confidence Interval}
 \hspace{-0.1in}  D_i^{\bar{w}^{\text{rec}}(t)}(t)= U \sqrt{\frac{2\left( 1 - \bar{w}^{\text{rec}}(t)\right)^{1/\eta} \log t}{n_i(t)}+\frac{\alpha \bar{w}^{\text{rec}}(t)}{2 n_i(t)}} . 
\end{equation}

Finally, we propose a new algorithm, Algorithm~\ref{alg:ucb-unk-w}, for optimal caching and recommendation.
Regret analysis of Algorithm~\ref{alg:ucb-unk-w} is the subject matter of our future work. In Section, \ref{Numerical Results} we show that the estimated $\bar{w}^{\text{rec}}(t)$~(see~\eqref{Eqn:W_rec Est}) converges to the actual mean $\bar{w}^{\text{rec}}$ pretty quickly. We also show the regret performance of Algorithm \ref{alg:ucb-unk-w} in Section \ref{Numerical Results}.


\begin{algorithm}[H]
\caption{UCB  with Recommendation and Unknwon $w_u^{rec}\text{'s}$ 
}\label{alg:ucb-unk-w}
\begin{algorithmic}[1]
\REQUIRE cache capacity $C$, maximum number of recommended contents $R$, set of contents $\mathcal{N}$
\ENSURE Updated statistics $n_i(t)$ and $p_i(t) \quad \forall i \in \mathcal{N}$ 
\STATE Initialization: $n_i(0)=1 ,p_i(0)=0  \quad \forall i \in \mathcal{N}$
\FOR{$t=1,2, \ldots, T$}
    \STATE Estimate  $\bar{w}^{\text{rec}}(t) $ using \eqref{Eqn:W_rec Est}
    \STATE Compute UCBs $$U_i(t)=p_i(t)+D_i^{\bar{w}^{\text{rec}}(t)}
    (t)$$ for all $i \in \mathcal{N}$ using (\ref{Eq: Estimation}) and \eqref{Eqn: New Confidence Interval}.
    \STATE Pick Top $C$ contents with the highest UCBs to cache.\\
    Call this set $y(t)$
    \remove{
    $$C(t)=\underset{C}{\operatorname{argmax}}\left\{U_k(t): \forall k\right\}$$
    }
    \STATE Pick any $R$ contents 
    from $y(t)$ to recommend
    \STATE Observe the number of requests for cached contents and call \textbf{Algorithm \ref{alg:update-stats}}
\ENDFOR
\end{algorithmic}
\end{algorithm}

\begin{figure*}[!htbp] 
    \centering


    \subfigure[Regret of Algorithm~\ref{alg:ucb-rec} vis-à-vis existing algorithms for uniform preference distribution over recommended contents.]
    {\includegraphics[width=2in,height=1.3in]{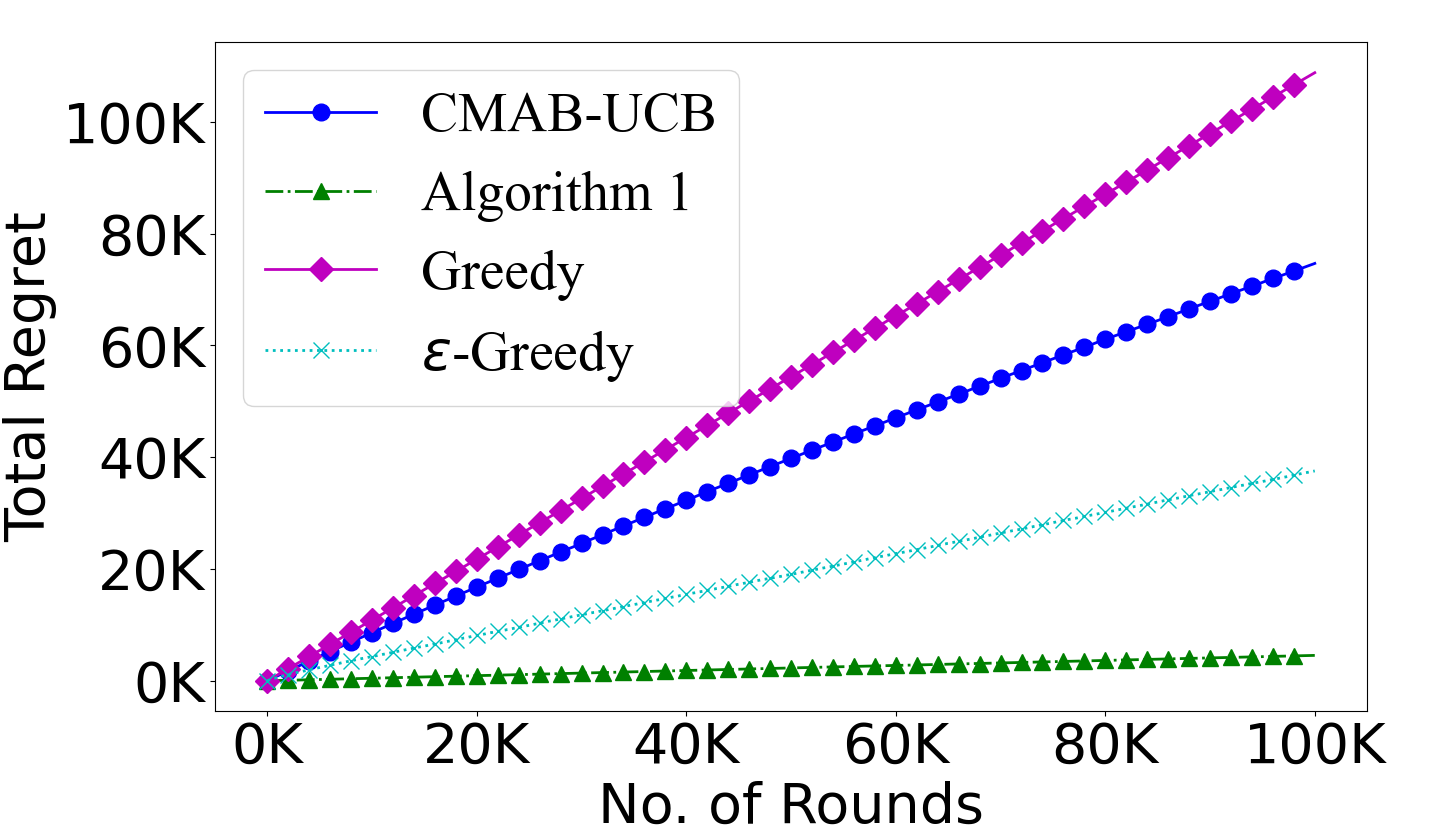}} \label{fig:Performance}
    \hfill
    \subfigure[Regret of Algorithm~\ref{alg:ucb-rec} for different values of $w_u^{\text{rec}}$.]
    {\includegraphics[width=2in,height=1.3in]{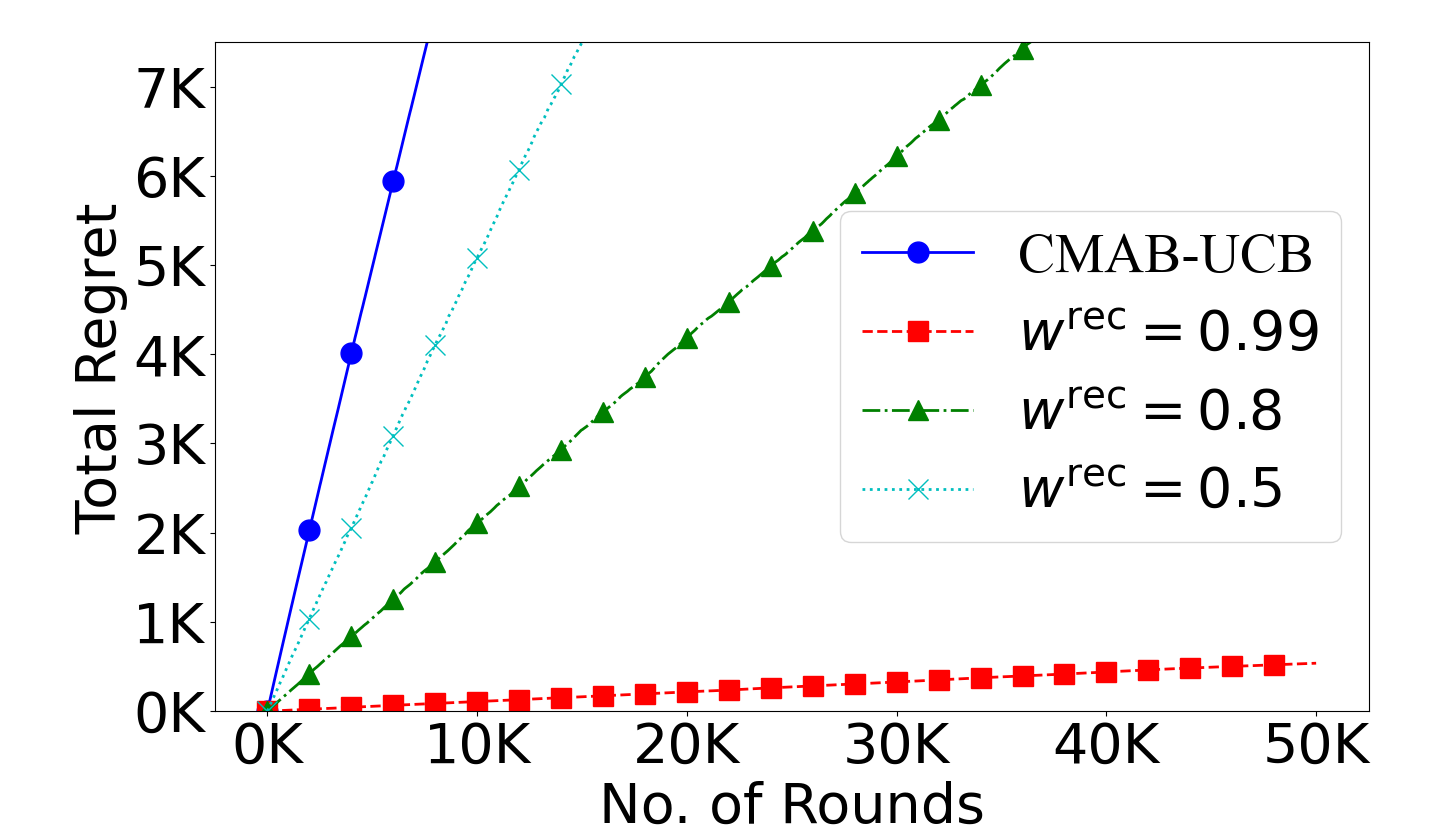} \label{fig:W_rec}}
    \hfill
    \subfigure[Regret of Algorithm~\ref{alg:ucb-rec} for different numbers of users.]
    {\includegraphics[width=2in,height=1.3in]{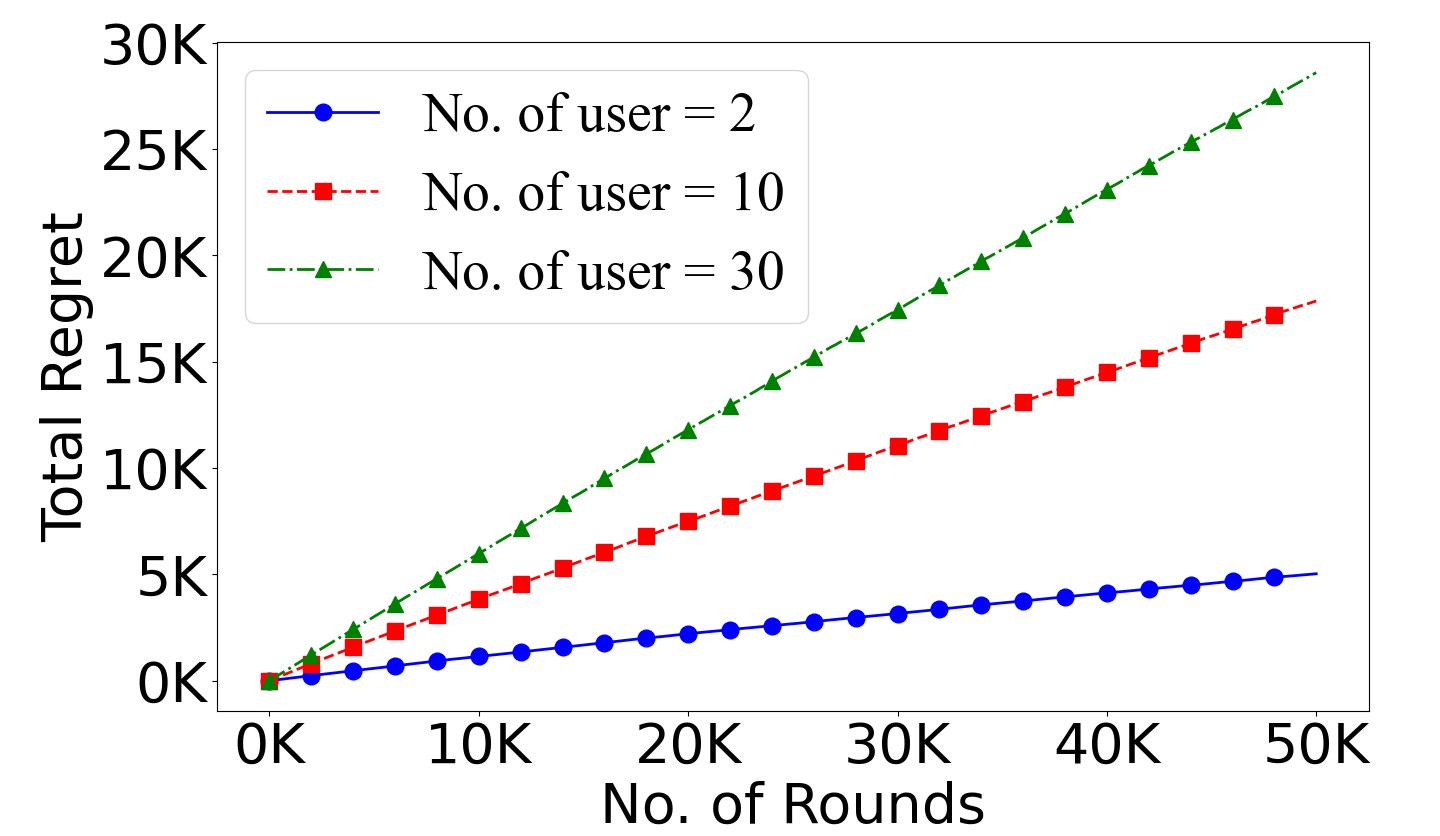} \label{fig:alpha}}
    \vspace{-0.2cm} 

    \subfigure[Regret of Algorithm~\ref{alg:ucb-rec} vis-à-vis existing algorithms for Zipf preference distribution and uniform distribution over recommended contents.]
    {\includegraphics[width=2in,height=1.3in]{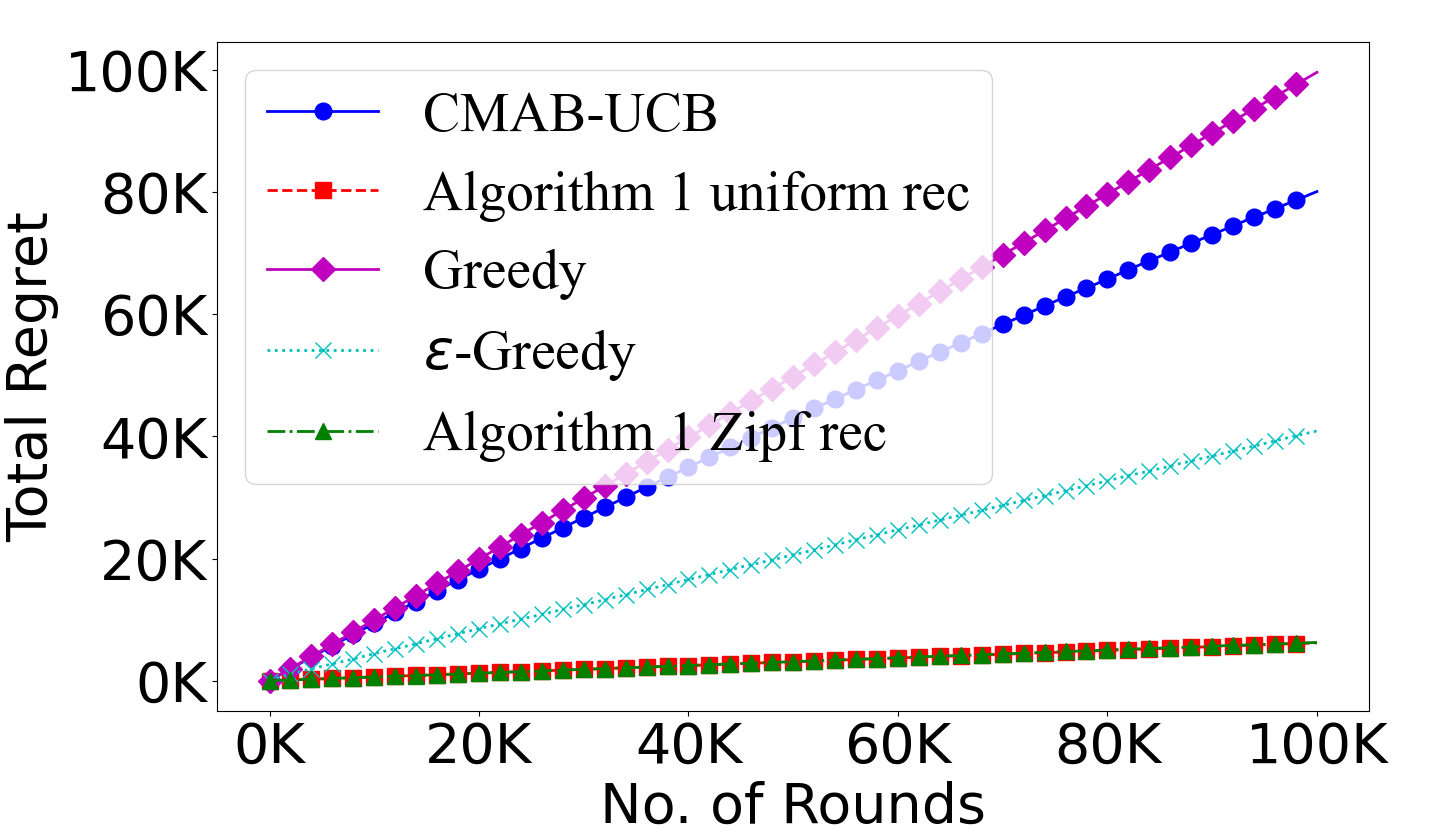} \label{fig:Performance_Zipf}}
    \hfill
    \subfigure[Convergence of $\bar{w}^{\text{rec}}$ under Algorithm~\ref{alg:ucb-unk-w}.]
    {\includegraphics[width=2in,height=1.3in]{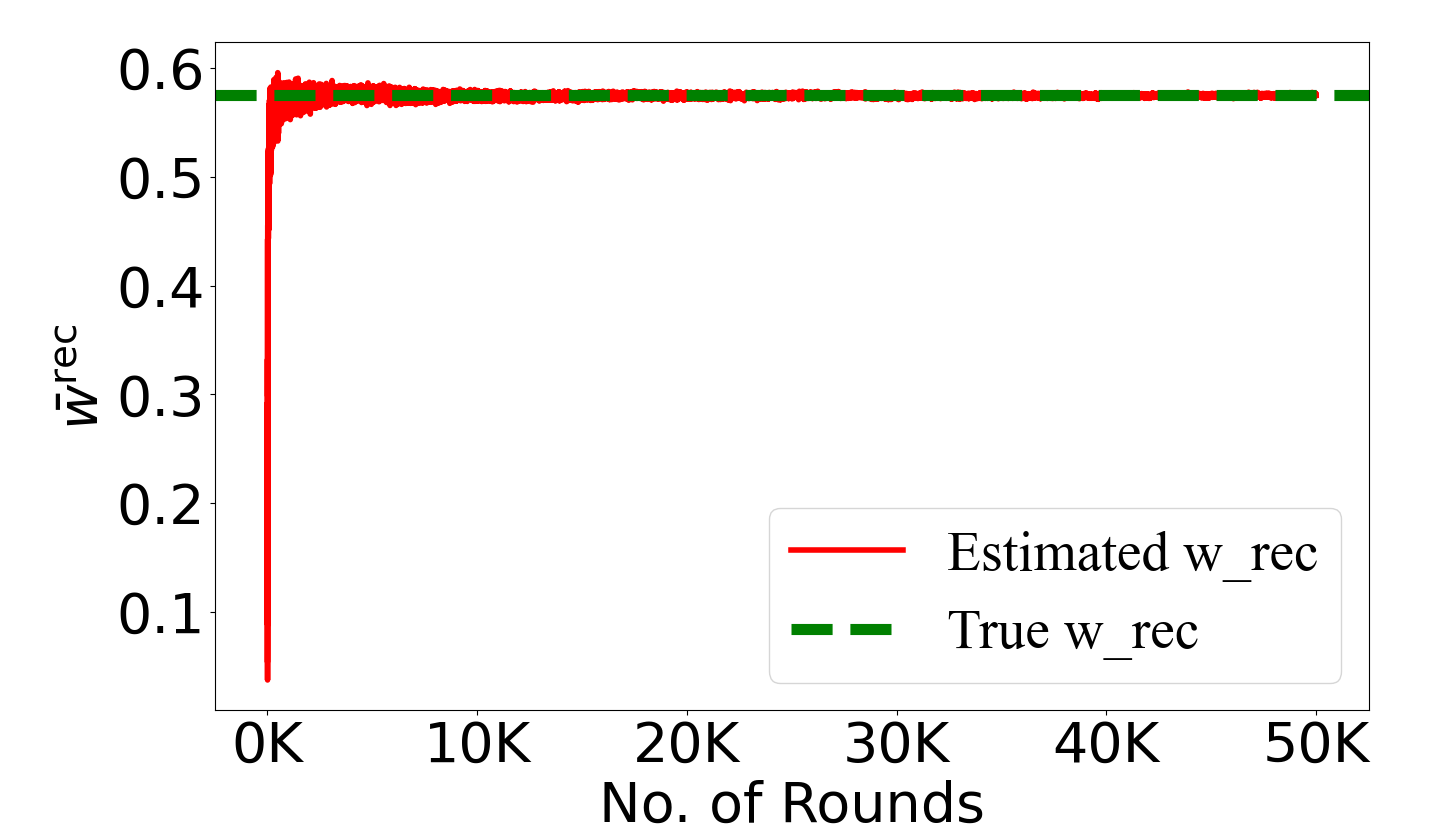} \label{fig:Convergence}}
    \hfill
    \subfigure[Regret comparison of Algorithms~\ref{alg:ucb-rec} and~\ref{alg:ucb-unk-w}.]
    {\includegraphics[width=2in,height=1.3in]{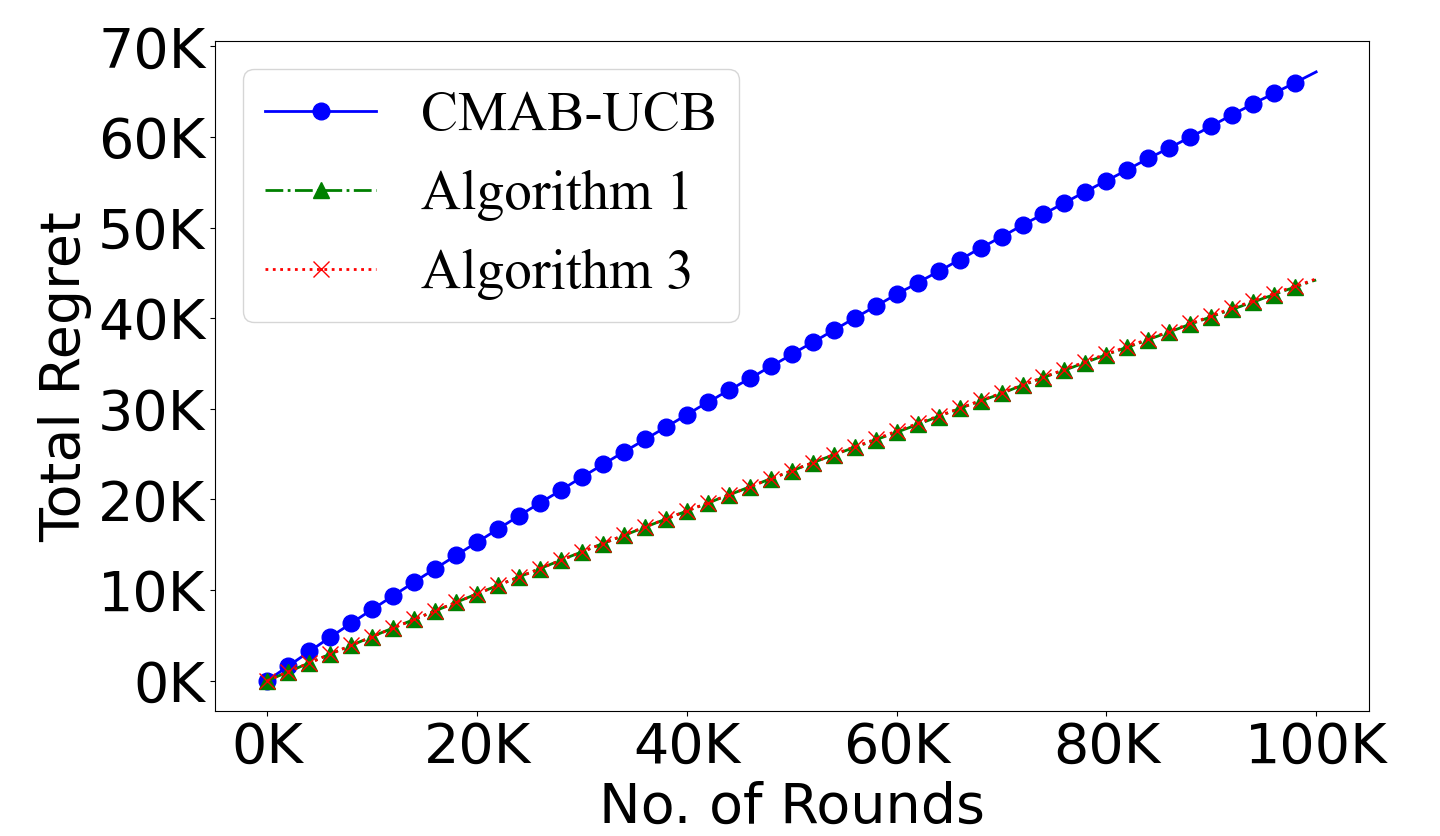} \label{fig:Regret_Unknown_W}}
    \caption{Performance of Algorithms~\ref{alg:ucb-rec} and~\ref{alg:ucb-unk-w}.}
    \label{fig:all_metrics}
\end{figure*}

\section{Numerical Results} \label{Numerical Results}
In this section, we numerically evaluate the UCB-based
algorithm, Algorithm \ref{alg:ucb-rec} and \ref{alg:ucb-unk-w} for various system parameters. We compare the performance of Algorithm \ref{alg:ucb-rec} to that of the standard CMAB-UCB algorithm \cite{DBLP:journals/corr/KvetonWAEE14}, the greedy algorithm, and the $\epsilon$-greedy algorithm. In all the experiments, we assume a total number of contents $N = 50$, cache capacity $C = 20$, the number of users $U = 20$ and $\eta = 4$.

\paragraph{Performance of Algorithm \ref{alg:ucb-rec} compared with CMAB-UCB \cite{DBLP:journals/corr/KvetonWAEE14}, $\epsilon$-greedy and greedy algorithms with uniform distribution over the recommended contents}
We compare the performance of Algorithm \ref{alg:ucb-rec} with CMAB-UCB ,$\epsilon$-Greedy and Greedy algorithm. In the Greedy algorithm, we calculate $p_i(t)$ for all $i \in \mathcal{N}$. Then, pick the top $C$ contents with the highest $p_i(t)$. In the $\epsilon$-Greedy algorithm, a random number is generated every time. If it exceeds the epsilon, the top $C$ contents with the highest $p_i(t)$ are selected. Else $C$ contents selected uniformly from content set
 $\mathcal{N}$. In this experiment, we used $w_u^{rec} = 0.95 $ for all $u \in \mathcal{U}$, $\alpha = 5, \epsilon = 0.4$. We can see from the Fig.\ref{fig:Performance} that our algorithm performs better compared to better to other algorithms.

\paragraph{Performance of Algorithm \ref{alg:ucb-rec} compared with CMAB-UCB \cite{DBLP:journals/corr/KvetonWAEE14} for various $w_u^{rec}$} In this experiment, we compare the performance of  Algorithm \ref{alg:ucb-rec} for various $w_u^{rec}$ values i.e. $w_u^{rec} = 0.99, 0.8, 0.5$ for all $u \in \mathcal{U}$. When $w_u^{rec} \approx 1$, users always tend to accept the recommendation. Since we recommend cached contents, we expect the regret to be low. Hence, we want regret to decrease when $w_u^{rec}$ increases and becomes close to 1. This can be seen from the Fig.\ref{fig:W_rec}

\paragraph{Performance of Algorithm \ref{alg:ucb-rec} compared with CMAB-UCB \cite{DBLP:journals/corr/KvetonWAEE14} for various number of users} In this experiment, we compare the performance of Algorithm~\ref{alg:ucb-rec} for various number of users i.e. $U = 2,10,30$ keeping $w_u^{rec} = 0.95$ for all $u \in \mathcal{U}$. From the Fig.\ref{fig:alpha}, we can see that regret increases with the number of users. This is expected becacuse as the number of users increases, the number of unknowns $p_u^{\text{req}}(\cdot)$ also increases. The increase in the regret due to number of users can also be seen in the Theorem~\ref{Theorem: Regret Results}.  

\paragraph{Performance of Algorithm \ref{alg:ucb-rec} compared with existing algorithms under the Zipf distribution among the recommended contents} In this experiment , we assume Zipf distribution \eqref{Eqn: Zipf Distr} over the recommended contents instead of uniform distribution. This makes the probability of requests to depend on the position of content in the recommended list. Here we also assume $w_u^{\text{rec}}$ is different for $u \in \mathcal{U}$ and are choosen uniformly from $[0.9 , 0.99]$  for each user. Similarly $\beta_u$ is choosen uniformly from $[1,2]$ for  each user. Also $\alpha$ is fixed at $5$. Then we compare the performance of Algorithm \ref{alg:ucb-rec} for both uniform and Zipf distribution over recommended contents with the other algorithms and is shown in the Fig.\ref{fig:Performance_Zipf}. From the Fig.\ref{fig:Performance_Zipf}, we can see that having Zipf distribution over the recommended contents yields the same performance as uniform distribution as we justified in Remark \ref{Remark:Zipf}.

\paragraph{Convergence of $\bar{w}^{\text{rec}}(t)$ to true mean $\bar{w}^{\text{rec}}$} In this experiment,we assume uniform distribution over the recommended contents and also assume that same contents are recommended to all the users. We choose $w_u^{\text{rec}}$ uniformly from $[0.1,0.9]$ for each user $u \in \mathcal{U}$. Also $\alpha$ is fixed at 5. Then we estimate $\bar{w}^{\text{rec}}(t)$ at every time $t$ and convergence is shown in the Fig.\ref{fig:Convergence}. From the Fig. \ref{fig:Convergence}, we can see that $\bar{w}^{\text{rec}}(t)$ converges to true mean $\bar{w}^{\text{rec}}$ as expected.

\paragraph{Regret performance of algorithm \ref{alg:ucb-unk-w} compared with Algorithm \ref{alg:ucb-rec} and CMAB-UCB \cite{DBLP:journals/corr/KvetonWAEE14}} In this experiment we  assume uniform distribution over the recommended contents and also assume that same contents are recommended to all the users. We choose $w_u^{\text{rec}}$ uniformly from $[0.1,0.9]$ for each user $u \in \mathcal{U}$. Also $\alpha$ is fixed at 5. From the Fig. \ref{fig:Regret_Unknown_W}, we can see that Algortihm \ref{alg:ucb-unk-w} performs close to the Algorithm \ref{alg:ucb-rec} with known $\bar{w}^{\text{rec}}$. 


\section{Conclusion and Future Work} \label{Conclusion and Future work} 
In this work, we analysed a content caching problem with  recommendations, modelling it as a CMAB. We proposed a UCB-based algorithm~(Algorithm~\ref{alg:ucb-rec}) and  provided an upper bound on its regret (Theorem~\ref{Theorem: Regret Results}).
We also considered a setup where the users’
recommendation acceptabilities are unknown and proposed another 
 UCB-based algorithm~(Algorithm~\ref{alg:ucb-unk-w}) for this case.
We numerically compared the performance of these algorithms to that of a few existing algorithms. 
We found that caching with recommendation improves the cache hit performance in all the cases.

In our future work,  we would like to incorporate recommendation quality \eqref{eqn: Recommendation Quality} into our problem and would also like to analyze  regret of Algorithm \ref{alg:ucb-unk-w} for the case of unknown  $w_u^{\mathrm{rec}}$'s.
We would also like to consider another recommendation model where each user examines the contents in its recommendation list one by one until it finds an interesting content, and once it finds an interesting content, it does not examine the rest.

\bibliographystyle{plain}
\bibliography{references}

\appendix 

\subsection{Proof of Lemma~\ref{Lemma 1}} \label{Proof:Lemma1}
\begin{proof}
Observe that
    \begin{align*}
        \operatorname{R}(T)  = & T \sum_u \sum_i y_i^* p_u^{\text {req }}(i) - \sum_{t=1}^T \sum_u \sum_{i} y_i^t p_u^{\text {req }}(i) \nonumber \\
        = &  T \sum_{i \in \mathcal{Y}} \sum_u p_u^{\text {req }}(i)-\sum_{t=1}^T \sum_u \sum_{i \in y(t)} p_u^{\text {req }}(i)  \nonumber\\
        \stackrel{(a)}= & T \sum_{i \in \mathcal{Y}} \sum_u \bigg[w_u^{\text{rec}} \frac{x_{ui}^*}{R} + ( 1 - w_u^{\text{rec}}) p_u^{\text{pref}}(i)\bigg] \\
        & - \sum_{t=1}^T \sum_{i \in y(t)}  \sum_u \bigg[w_u^{\text{rec}} \frac{x_{ui}^{(t)}}{R} + ( 1 - w_u^{\text{rec}} ) p_u^{\text{pref}}(i)\bigg] \\
        \stackrel{(b)}=& \sum_{t=1}^T  \bigg[ \sum_{i \in \mathcal{Y}} \sum_u ( 1 - w_u^{\text{rec}} ) p_u^{\text{pref}}(i) \\
        &-\sum_{i \in y(t)} \sum_u  ( 1 - w_u^{\text{rec}} ) p_u^{\text{pref}}(i)\bigg] \\
        &+ \underbrace{T  \sum_u \frac{w_u^{\text{rec}}}{R}  \sum_{i \in \mathcal{Y}}x_{ui}^* - \sum_{t=1}^T \sum_u \frac{w_u^{\text{rec}}}{R}\sum_{i \in y(t)}x_{ui}^{(t)}}_{=0} \\
        =& T \sum_{i \in \mathcal{Y}} \hat{p_i}-\sum_{t=1}^T \sum_{i \in y(t)} \hat{p_i}.
    \end{align*}
In (a), $x_{ui}^* = 1 $ iff $i$ is recommended and belongs to $\mathcal{Y}$, and $x_{ui}^{(t)} = 1$ iff $i$ is recommended and belongs to $y(t)$. In (b), $ \sum_{i \in \mathcal{Y}}x_{ui}^* = \sum_{i \in y(t)}x_{ui}^{(t)} = R $ because $R$ contents are recommended every time.
\end{proof}

\subsection{Proof of Theorem \ref{Theorem: Regret Results}} \label{Proof; Theorem 1}

\begin{proof}
   We consider bijections
  $\pi^{y(t)}: y(t) \mapsto \mathcal{Y}$ satisfying $\pi^{y(t)}(k) = k$ for all $k \in y(t) \cap \mathcal{Y}$,
    which we use for an equivalent expression of the regret. \remove{For a given $y(t)$,
  $\pi^{y(t)}$ can also be seen as a
 bijection from $\{1,2,\cdots,C\}$ to $\{1,2,\cdots,C\}$, where $\pi^{y(t)}(k)$ gives the index of the item in $\mathcal{Y}$ that is paired with the $k$th item in $y(t)$.} We can  express these bijections  in terms of the following indicator functions.
\begin{align}
    \mathds{1}_{e, k}(t) \coloneqq \mathds{1}\left\{k \in y(t),  \pi^{y(t)}(k)=e\right\}. \label{Bijection function}
\end{align}
From Lemma \eqref{Lemma 1} , we have 
\begin{align}
 \operatorname{R}(T)&=T \sum_{i \in Y} \hat{p_i}-\sum_{t=1}^T \sum_{i \in y(t)} \hat{p_i} \nonumber\\
& =\sum_{t=1}^T\left[\sum_{i \in \mathcal{Y}} \hat{p_i}-\sum_{i \in y(t)} \hat{p_i}\right] \nonumber \\
& =\sum_{t=1}^T \sum_{k \in \mathcal{N} \setminus \mathcal{Y}} \sum_{e \in \mathcal{Y}} \Delta_{e, k} \mathds{1}_{e, k}(t)  \label{eqn: Regret}
\end{align}
where the last equality is obtained using the definitions of the indicator functions in~\eqref{Bijection function}. Taking expectation on both sides of \eqref{eqn: Regret}, we get
\begin{align}
    & \mathbb{E}[\operatorname{R}(T)]= \sum_{k \in \mathcal{N} \setminus \mathcal{Y}} \sum_{e \in \mathcal{Y}} \Delta_{e, k} \mathbb{E}\left[\sum_{t=1}^T \mathds{1}_{e, k}(t)\right]  .\label{eqn:Expected Regret} 
\end{align}
Now we bound the expected cumulative regret associated with each content $k \in \mathcal{N} \setminus \mathcal{Y} $. The key idea in this step is to decompose the indicator function into two parts as follows
\begin{align}
\mathds{1}_{e, k}(t)= & \mathds{1}_{e, k}(t) \mathds{1}\left\{n_k(t-1) \leqslant l_{e, k}\right\} \nonumber \\
& + \mathds{1}_{e, k}(t) \mathds{1}\left\{n_k(t-1)>l_{e, k}\right\} \label{Indicator Parts}
\end{align}
where the numbers $l_{e, k}$ are chosen as explained below. We rewrite \eqref{eqn:Expected Regret} using \eqref{Indicator Parts} as follows 
\begin{align} \label{eqn:Split regret}
& \mathbb{E}[\operatorname{R}(T)] = \sum_{k \in \mathcal{N} \setminus \mathcal{Y}} \sum_{e \in \mathcal{Y}} \Delta_{e, k} \mathbb{E}\left[\sum_{t=1}^T \mathds{1}_{e, k}(t) \mathds{1}\left\{n_k(t - 1) \leqslant l_{e, k}\right\}\right]  \nonumber \\
& \quad + \sum_{k \in \mathcal{N} \setminus \mathcal{Y}} \sum_{e \in \mathcal{Y}} \Delta_{e, k} \mathbb{E}\left[\sum_{t=1}^T \mathds{1}_{e, k}(t) \mathds{1}\left\{n_k(t - 1) > l_{e, k}\right\}\right] .  
\end{align}
We bound the two terms in the right hand side of \eqref{eqn:Split regret} using Lemmas \ref{Lemma:Regret 1} and \ref{Lemma :Regret 2}, respectively.

Let us define sets
$\tau_{i,u} \coloneqq \{t\leq T:i\text{ is cached and recommended to user} \quad u \text{ at time} \quad t\}$ 
for all $i$. We can then estimate $\hat{p_i}$'s using $\hat{p_i}(t)$'s and is defined as follows.
\begin{align}
    \hat{p_i}(t) & \coloneqq 
     \frac{\sum_{s=1}^{t} \bigg[Z_i(s) -  \sum_{u} \frac{w_u^\text{rec}}{R} \mathds{1}\{s \in \tau_{i,u} \}\bigg]}{n_i(t)} .\nonumber
\end{align}
Further, recall that  $\mathds{1}_{e, k}(t) = 1$ implies that $k \in y(t)$ and $e = k$ or $e \in \mathcal{Y}\setminus y(t)$.
In either case, $U_k(t) \geqslant U_e(t)$ which in turn implies at least one of the following three events.
\begin{enumerate} \label{Claim 1}
    \item $\hat{p_e}(t) \leqslant \hat{p_e}-D_e(t) \quad \forall e \in \mathcal{Y}$
    \item $\hat{p_k}(t) \geqslant \hat{p_k}+D_k(t)$
    \item $\Delta_{e, k} \leqslant 2 D_k(t). \quad \forall e \in \mathcal{Y}$
\end{enumerate}
\remove{
\textbf{Hoeffding's inequality}:Let $Z_1, \ldots, Z_n$ be independent bounded random variables with $Z_i \in[a, b]$ for all $i$, where $-\infty<a \leq b<\infty$. Then
$$
\mathbb{P}\left(\frac{1}{n} \sum_{i=1}^n\left(Z_i-\mathbb{E}\left[Z_i\right]\right) \geq t\right) \leq \exp \left(-\frac{2 n t^2}{(b-a)^2}\right)
$$
and
$$
\mathbb{P}\left(\frac{1}{n} \sum_{i=1}^n\left(Z_i-\mathbb{E}\left[Z_i\right]\right) \leq-t\right) \leq \exp \left(-\frac{2 n t^2}{(b-a)^2}\right)
$$
for all $t \geq 0$.
}
We can bound the probabilities of Events $1$ and $2$ using Hoeffoding's inequalities as follows.
\begin{align}
    P\bigg[\hat{p_k}(t)-\hat{p_k}  & \geqslant  D_k(t) \bigg] \leqslant \frac{e^{-\alpha \bar{w}^{\text{rec}}}}{t^{4\left( 1 - \bar{w}^{\text{rec}}\right)^{1/\eta}}} \label{Hoefdings Inequality 1}\\
     P\bigg[\hat{p_e}(t)- \hat{p_e}  & \geqslant  D_k(t) \bigg] \leqslant \frac{e^{-\alpha \bar{w}^{\text{rec}}}}{t^{4\left( 1 - \bar{w}^{\text{rec}}\right)^{1/\eta}}}  .\label{Hoefdings Inequality 2}
\end{align}
If $n_k(t-1) >  4 U^2 \left(\frac{2\left( 1 - \bar{w}^{\text{rec}}\right)^{1/\eta} \log T}{\Delta_{e,k}^2}+\frac{\alpha \bar{w}^{\text{rec}}}{2 \Delta_{e, k}^2}\right)$, rearranging the terms gives $$ \Delta_{e, k} > 2 D_k(t),$$
and so, Event~3 cannot happen in this case. 
Hence we choose \begin{equation} \label{eqn: l value}
    l_{e,k}=\left\lceil 4 U^2\left(\frac{2\left( 1 - \bar{w}^{\text{rec}}\right)^{1/\eta}\ \log T}{\Delta_{e,k}^2}+\frac{\alpha \bar{w}^{\text{rec}}}{2 \Delta_{e, k}^2}\right) \right\rceil.
\end{equation}

Now, using (\ref{Hoefdings Inequality 1}) and (\ref{Hoefdings Inequality 2}), and choosing $l_{e,k}$ as stated above, we can prove the following bounds. 
\begin{lemma} \label{Lemma:Regret 1} 
\begin{align}
    \sum_{e \in \mathcal{Y}} &\Delta_{e,k} \mathbb{E} \bigg[ \sum_{t=1}^{T} \mathds{1}_{e, k}(t) \mathds{1}\left\{n_k(t-1) > l_{e,k}\right\} \bigg]  \nonumber \\
    &\leq U^2 \sum_{e \in \mathcal{Y}} \Delta_{e,k} \left( \frac{2e^{-\alpha \bar{w}^{\text{rec}}}}{4\left( 1 - \bar{w}^{\text{rec}}\right)^{1/\eta}-1} \right)
\end{align}
\begin{proof}
See Appendix \ref{Proof:Lemma 2}
\end{proof}

\end{lemma}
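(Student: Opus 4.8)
The plan is to fix a pair $(e,k)$ with $e \in \mathcal{Y}$, $k \in \mathcal{N}\setminus\mathcal{Y}$, reduce the summand $\mathds{1}_{e,k}(t)\mathds{1}\{n_k(t-1) > l_{e,k}\}$ to the two ``bad'' confidence events, and then sum their probabilities over $t$. First I would invoke the contrapositive argument already set up before the lemma: whenever $\mathds{1}_{e,k}(t)=1$ we have $U_k(t) \geqslant U_e(t)$, which forces at least one of Events 1, 2, 3 to hold. The defining property of $l_{e,k}$ in~\eqref{eqn: l value} is that on $\{n_k(t-1) > l_{e,k}\}$ the inequality $\Delta_{e,k} > 2 D_k(t)$ holds for every $t \leqslant T$ (this is exactly the rearrangement displayed just after~\eqref{Hoefdings Inequality 2}, using $\log t \leqslant \log T$), so Event 3 is impossible there. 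Hence on $\{\mathds{1}_{e,k}(t)=1,\, n_k(t-1) > l_{e,k}\}$ at least one of Events 1 and 2 occurs, giving the pointwise domination $\mathds{1}_{e,k}(t)\mathds{1}\{n_k(t-1) > l_{e,k}\} \leqslant \mathds{1}\{\text{Event }1\} + \mathds{1}\{\text{Event }2\}$.

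Next I would take expectations and use linearity to obtain $\mathbb{E}\big[\sum_{t=1}^T \mathds{1}_{e,k}(t)\mathds{1}\{n_k(t-1) > l_{e,k}\}\big] \leqslant \sum_{t=1}^T \big(P[\text{Event }1] + P[\text{Event }2]\big)$, and then bound each term by the Hoeffding estimates~\eqref{Hoefdings Inequality 1} and~\eqref{Hoefdings Inequality 2}, each at most $e^{-\alpha \bar{w}^{\text{rec}}}\, t^{-\gamma}$ with $\gamma := 4(1-\bar{w}^{\text{rec}})^{1/\eta}$. The point I would stress here is the design of the confidence width~\eqref{Confidence Interval}: the factor $U$ inside $D_i(t)$ is matched to the range (of order $U$) of the per-slot aggregate request count $Z_i(s)$ over the $U$ users, and the $\log t$ dependence (rather than $\log n_i$) makes the per-round failure probability at most $e^{-\alpha \bar{w}^{\text{rec}}} t^{-\gamma}$ uniformly in the realized count $n_k(t)$. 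This is exactly what lets me avoid a union bound over the random number of pulls and keep the clean exponent $\gamma$.

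Finally I would sum the resulting $p$-series $\sum_{t=1}^{T} t^{-\gamma}$. The hypothesis $\bar{w}^{\text{rec}} \leqslant 1 - 4^{-\eta}$ gives $\gamma \geqslant 1$, with $\gamma > 1$ away from the boundary, so an integral comparison controls $\sum_{t\geqslant 1} t^{-\gamma}$ by $\frac{1}{\gamma - 1} = \frac{1}{4(1-\bar{w}^{\text{rec}})^{1/\eta}-1}$, and each pair contributes at most $\frac{2 e^{-\alpha \bar{w}^{\text{rec}}}}{4(1-\bar{w}^{\text{rec}})^{1/\eta}-1}$ (the $2$ from the two events). Multiplying by $\Delta_{e,k}$, summing over $e \in \mathcal{Y}$, and retaining the $U^2$ prefactor that accompanies the confidence-width scaling then yields the claimed bound.

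I expect the main obstacle to be the handling of Event 3 together with the stopping-time nature of $n_k(t)$: one must verify that the single threshold $l_{e,k}$, defined with $\log T$, simultaneously rules out $\Delta_{e,k} \leqslant 2D_k(t)$ for all $t \leqslant T$ even though $D_k(t)$ depends on the running, random count $n_k(t)$, and one must justify the Hoeffding step for the realized count rather than a fixed one. It is precisely the $\log t$ scaling in~\eqref{Confidence Interval} that makes this go through without incurring an extra factor of $t$ from a union bound over counts, which is what keeps the final exponent at $\gamma-1$ in the denominator.
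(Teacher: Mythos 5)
Your route is the same as the paper's: restrict to the event $U_k(t) \geqslant U_e(t)$, use the choice of $l_{e,k}$ to rule out Event~3 on $\{n_k(t-1) > l_{e,k}\}$, bound the probabilities of Events~1 and~2 by the Hoeffding estimates \eqref{Hoefdings Inequality 1}--\eqref{Hoefdings Inequality 2}, and sum over $t$. However, your final summation step contains a genuine error. You claim that integral comparison bounds $\sum_{t \geqslant 1} t^{-\gamma}$ by $\frac{1}{\gamma-1}$, where $\gamma = 4\left(1-\bar{w}^{\text{rec}}\right)^{1/\eta}$. This is false: that sum is at least its first term, namely $1$, whereas $\frac{1}{\gamma-1}$ can be as small as $\frac{1}{3}$ (when $\bar{w}^{\text{rec}}$ is near $0$, so $\gamma$ is near $4$). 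Starting the comparison at $t=1$ only gives $\sum_{t\geqslant 1} t^{-\gamma} \leqslant 1 + \frac{1}{\gamma-1}$, and the extra additive $1$ does not fit under the claimed per-pair constant; nor can the $U^2$ slack absorb it in general, since for $U = 1$ one would need $\gamma \leqslant 1$, contradicting the $\gamma > 1$ required for convergence.

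The missing observation---which is exactly how the paper obtains the clean constant---is that the indicator $\mathds{1}\left\{n_k(t-1) > l_{e,k}\right\}$ vanishes for $t \leqslant l_{e,k}$, because $n_k(t-1)$ cannot exceed $t$ (the count is incremented at most once per slot). Hence the sum effectively runs only over $t \geqslant l_{e,k}+1$, and
\begin{align*}
\sum_{t=l_{e,k}+1}^{T} t^{-\gamma} \leqslant \int_{l_{e,k}}^{\infty} t^{-\gamma}\,dt = \frac{l_{e,k}^{\,1-\gamma}}{\gamma-1} \leqslant \frac{1}{\gamma-1},
\end{align*}
where the last inequality uses $l_{e,k} \geqslant 1$ and $\gamma > 1$. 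With this correction your argument goes through and coincides with the paper's proof in Appendix~\ref{Proof:Lemma 2}. A secondary point: your plan to ``retain the $U^2$ prefactor that accompanies the confidence-width scaling'' is misleading---no factor of $U^2$ arises anywhere in this lemma's derivation; the $U^2$ in the statement is pure slack (the proof yields the right-hand side without $U^2$, which suffices since $U \geqslant 1$), so you should not expect it to do any work, and in particular you cannot lean on it to repair the summation step.
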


\begin{lemma}  \label{Lemma :Regret 2} 
    \begin{align}
    & \sum_{e \in \mathcal{Y}} \Delta_{e,k} \mathbb{E} \bigg[ \sum_{t=1}^{T} \mathds{1}_{e, k}(t) \mathds{1}\left\{n_k(t-1) \leqslant l_{e,k} \right\} \bigg] \nonumber \\
    & {\leqslant} U^2 \left[\frac{16\left( 1 - \bar{w}^{\text{rec}}\right)^{1/\eta} \log T }{ \Delta_{\min,k}} + \frac{4\alpha \bar{w}^{\text{rec}}}{\Delta_{\min,k}} \right]
\end{align}
\end{lemma}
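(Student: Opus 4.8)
The plan is to bound this ``exploration cost'' term by a purely deterministic counting argument, in contrast to Lemma~\ref{Lemma:Regret 1}, which relies on the concentration bounds~\eqref{Hoefdings Inequality 1}--\eqref{Hoefdings Inequality 2}. The indicator $\mathds{1}\{n_k(t-1)\leqslant l_{e,k}\}$ restricts attention to slots in which content $k$ has been cached only a few times, so the whole sum is controlled by counting how often $k$ can be cached while this budget is not exhausted. Since $\mathds{1}_{e,k}(t)=1$ forces $k\in y(t)$, every slot contributing to the sum increments $n_k$ by one; I would therefore re-index the time sum as a sum over the successive slots in which $k$ is cached, writing $n_k(t-1)=m-1$ at the $m$-th such slot, at which instant the bijection matches $k$ to a single $e\in\mathcal{Y}$.

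The key step is the ordering trick of~\cite{DBLP:journals/corr/KvetonWAEE14}. I would relabel the optimal contents $e_1,\dots,e_C$ of $\mathcal{Y}$ so that $\Delta_{e_1,k}\geqslant\Delta_{e_2,k}\geqslant\cdots\geqslant\Delta_{e_C,k}=\Delta_{\min,k}$; by the definition~\eqref{eqn: l value} of $l_{e,k}$ this makes $l_{e_1,k}\leqslant\cdots\leqslant l_{e_C,k}$. Setting $l_{e_0,k}=0$, I partition the pull-count axis into the intervals $(l_{e_{j-1},k},\,l_{e_j,k}]$. The crucial observation is that if at some slot $k$ is matched to $e_i$ with $n_k(t-1)$ lying in the $j$-th interval while $n_k(t-1)\leqslant l_{e_i,k}$ holds, then $l_{e_i,k}\geqslant n_k(t-1)>l_{e_{j-1},k}$ forces $i\geqslant j$, hence $\Delta_{e_i,k}\leqslant\Delta_{e_j,k}$. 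Thus every contributing slot whose pull-count falls in the $j$-th interval contributes at most $\Delta_{e_j,k}$, and the number of such slots is at most $l_{e_j,k}-l_{e_{j-1},k}$.

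Combining these two facts bounds the entire sum, surely and hence also in expectation, by $\sum_{j=1}^{C}\Delta_{e_j,k}\bigl(l_{e_j,k}-l_{e_{j-1},k}\bigr)$. Writing $B\coloneqq 4U^2\bigl(2(1-\bar{w}^{\text{rec}})^{1/\eta}\log T+\tfrac{\alpha\bar{w}^{\text{rec}}}{2}\bigr)$ so that $l_{e,k}=\lceil B/\Delta_{e,k}^2\rceil$, this telescoping-type sum is essentially a lower Riemann sum for the decreasing function $l\mapsto\sqrt{B/l}$ and is therefore bounded by $\int_{0}^{B/\Delta_{\min,k}^2}\sqrt{B/l}\,\mathrm{d}l=2B/\Delta_{\min,k}$. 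Substituting $B$ back and simplifying yields exactly $U^2\bigl[\tfrac{16(1-\bar{w}^{\text{rec}})^{1/\eta}\log T}{\Delta_{\min,k}}+\tfrac{4\alpha\bar{w}^{\text{rec}}}{\Delta_{\min,k}}\bigr]$, the claimed bound.

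The main obstacle is precisely this ordering/Riemann-sum step. A naive estimate would replace each summand by $\Delta_{e,k}\,l_{e,k}$ and sum over $e\in\mathcal{Y}$, introducing a spurious factor of $|\mathcal{Y}|$ and a sum of $1/\Delta_{e,k}$ terms rather than the single $1/\Delta_{\min,k}$ appearing in the statement; it is the monotone pairing of the increasing thresholds $l_{e_j,k}$ with the decreasing gaps $\Delta_{e_j,k}$ that collapses the estimate to $2B/\Delta_{\min,k}$. The remaining care is routine: handling the ceiling in~\eqref{eqn: l value} and the off-by-one arising from the initialization $n_i(0)=1$, each of which affects only lower-order additive constants.
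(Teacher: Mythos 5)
Your proposal is correct and takes essentially the same route as the paper's proof: your ordering of $\mathcal{Y}$ by decreasing gaps and the interval-partition count of how often $k$ can be cached with $n_k(t-1)$ in $(l_{e_{j-1},k},\,l_{e_j,k}]$ is exactly the content of the paper's maximization over cache sequences in~\eqref{eqn: Kveton}--\eqref{eqn: New Maximum}, and both arguments reduce to the identical telescoping sum (your $B$ is the paper's $M$) bounded by $2M/\Delta_{\min,k}$. The only presentational difference is that you prove the telescoping bound self-containedly via the Riemann-sum/integral comparison, whereas the paper invokes Lemma~\ref{Lemma 2} from~\cite{DBLP:journals/corr/KvetonWAEE14} (whose proof is that same estimate), and you explicitly flag the ceiling/off-by-one slack that the paper itself silently glosses over.
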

\begin{proof}
See Appendix \ref{Proof:Lemma 3}
\end{proof}
Now using Lemmas \ref{Lemma:Regret 1}, \ref{Lemma :Regret 2} in \eqref{eqn:Split regret} and further summing over $k \in \mathcal{N}\setminus \mathcal{Y}$, we get the regret bound in Theorem~\ref{Theorem: Regret Results}.
\end{proof}

\subsection{Proof of Lemma \ref{Lemma:Regret 1}} \label{Proof:Lemma 2}
\begin{proof} We can bound the second term of \eqref{eqn:Split regret} as 
   \begin{align*}
\lefteqn{
\sum_{t=1}^T \mathds{1}_{e, k}(t) \mathds{1}\left\{n_k(t-1)>l_{e,k}\right\}} \\ & =\sum_{t=l_{e,k}+1}^T \mathds{1}_{e, k}(t) \mathds{1}\left\{n_k(t-1)>l_{e,k}\right\}\\
& \leqslant \sum_{t=l_{e,k}+1}^I \mathds{1}\left\{
U_k(t) \geqslant U_e(t),
n_k(t-1)>l_{e,k} \right\}.
\end{align*}
Taking expectation on both the sides,
\begin{align}
\mathbb{E}&\left[\sum_{t=1}^T \mathds{1}_{e,k}(t) \mathds{1}\left\{n_k(t-1)>l_{e,k}\right\}\right]\nonumber\\
& = \sum_{t=l_{e,k}+1}^T \mathds{P}\left[U_k(t) \geq U_e(t), n_k(t-1)>l_{e,k}\right] \nonumber\\
& \leqslant \sum_{t=l_{e,k}+1}^T \mathds{P}\left[\text{Event 1} \cup \text{Event 2}\right]\nonumber \\
& \leqslant \sum_{t=l_{e,k}+1}^T P\left[\text{Event 1}\right]+P\left[\text{Event 2}\right] \nonumber\\
& \stackrel{(a)}{\leqslant} \sum_{t=l_{e,k}+1}^T 2\left[ \frac{e^{-\alpha \bar{w}^{\text{rec}}}}{t^{4\left( 1 - \bar{w}^{\text{rec}}\right)^{1/\eta}}}\right]\nonumber \\
& \leqslant 2e^{-\alpha 
  \bar{w}^{\text{rec}}} \int_{l_{e,k}}^{\infty} \frac{1}{t^{4\left( 1 - \bar{w}^{\text{rec}}\right)^{1/\eta}}} d t \nonumber\\
& =2 e^{-\alpha 
\left(\bar{w}^{\text{rec}}\right)}\left[\frac{t^{-4\left(1- \bar{w}^{\text{rec}}\right)^{1/\eta}+1}}{-4\left( 1 - \bar{w}^{\text{rec}}\right)^{1/\eta}+1}\right]_{l_{e,k}}^{\infty}  \nonumber\\
& =\frac{2 e^{-\alpha 
\left(\bar{w}^{\text{rec}}\right)}l_{e,k}^{-4\left( 1 - \bar{w}^{\text{rec}}\right)^{1/\eta}+1}   }{4\left( 1 - \bar{w}^{\text{rec}}\right)^{1/\eta}-1} \nonumber\\
&\stackrel{(b)}{\leqslant} \frac{2 e^{-\alpha 
  \bar{w}^{\text{rec}}}}{4\left( 1 - \bar{w}^{\text{rec}}\right)^{1/\eta}-1} \label{Regret 1a}.
\end{align}
where $(a)$ follows from~\eqref{Hoefdings Inequality 1} and \eqref{Hoefdings Inequality 2}, and $(b)$ from the assumptions that $4\left( 1 - \bar{w}^{\text{rec}}\right)^{1/\eta}>1$ and $l_{e,k}>1$. Using~\eqref{Regret 1a}, we can write the following bound.
\begin{align*}
    \sum_{e \in \mathcal{Y}} \Delta_{e,k} \mathbb{E}& \left[ \sum_{t=1}^{T} \mathds{1}_{e, k}(t) \mathds{1}\left\{n_k(t-1) > l_{e, k}\right\} \right]   \\
    &\leq \sum_{e \in \mathcal{Y}} \Delta_{e,k} \left( \frac{2e^{-\alpha \bar{w}^{\text{rec}}}}{4\left( 1 - \bar{w}^{\text{rec}}\right)^{1/\eta}-1} \right). 
\end{align*} 
\end{proof}

\subsection{Proof of Lemma~\ref{Lemma :Regret 2}} 
\label{Proof:Lemma 3}
\begin{proof}
We can write the following bound for $l_{e,k}$ in \eqref{eqn: l value}.
\begin{align}
\sum_{e \in \mathcal{Y}} &\Delta_{e,k} \mathbb{E} \bigg[ \sum_{t=1}^{T} \mathds{1}_{e, k}(t) \mathds{1}\left\{n_k(t-1) \leqslant l_{e, k}\right\} \bigg] \nonumber \\
\leqslant & \max_{y(1),y(2),\dots, y(T)} \bigg[ \sum_{t=1}^{T} \sum_{e \in \mathcal{Y}} \Delta_{e, k} \mathds{1}_{e,k}(t) \mathds{1} \Bigg\{n_k(t-1)    \nonumber\\
& \left. \leqslant 4 U^2 \left(\frac{2\left( 1 - \bar{w}^{\text{rec}}\right)^{1/\eta} \log T}{\Delta_{e,k}^2}+ \frac{\alpha \bar{w}^{\text{rec}}}{2 \Delta_{e, k}^2}\right)\right\} \bigg] \label{eqn: Kveton}
\end{align}

Let us recursively define $e_i$'s for all $i \leq C$ as follows.
\begin{equation*}
e_1 = \arg\max_{e\in \mathcal{Y}} \Delta_{e,k} \\
\end{equation*}
and for all $i \in \{2,\cdots,C\}$,
\begin{equation*}
e_i = \arg\max_{e \in \mathcal{Y}\setminus \{e_1, \cdots,e_{i-1}\}} \Delta_{e,k}.     
\end{equation*}
\remove{
\begin{align*}
    \Delta_{e_1,k} &= \max_{e \in \mathcal{Y}} \Delta_{e,k} \\
    \Delta_{e_2,k} &= \max_{e \in \mathcal{Y} \setminus \{e1\}} \Delta_{e,k} \\
     \Delta_{e_3,k} &= \max_{e \in \mathcal{Y}\setminus \{e_1, e_2\}} \Delta_{e,k} \\ 
     & \vdots \\
     \Delta_{e_{C-1},k} &= \max_{e \in \mathcal{Y}\setminus \{e_1, e_2, e_3,..., e_{C-2}\}} \Delta_{e,k} \\ 
     \Delta_{e_{C},k} &= \max_{e \in \mathcal{Y}\setminus \{e_1, e_2, e_3,..., e_{C-2}, e_{C-1}\}} \Delta_{e,k}
\end{align*}
}
We can  see that $\Delta_{e_1,k} \geq \Delta_{e_2,k} \geq ... \geq \Delta_{e_{C},k}$. Also, $\mathcal{Y} = \{ e_1,e_2,\cdots,e_C\}$.
Now we rewrite the right hand side of~\eqref{eqn: Kveton} as follows.
 \begin{align}
\max_{y(1),y(2),\dots, y(T)}& \bigg[ \sum_{t=1}^{T} \sum_{e \in \mathcal{Y}} \Delta_{e, k} \mathds{1}_{e,k}(t) \mathds{1} \Bigg\{n_k(t-1) \nonumber\\
        &  \left. \leqslant 4 U^2 \left(\frac{2\left( 1 - \bar{w}^{\text{rec}}\right)^{1/\eta} \log T}{\Delta_{e,k}^2}+ \frac{\alpha \bar{w}^{\text{rec}}}{2 \Delta_{e, k}^2}\right)\right\} \bigg] \nonumber\\
        \stackrel{(a)}=&  \max_{n_{e_1}, n_{e_2} \dots,n_{e_C}} \sum_{i=1}^C n_{e_i,k} \Delta_{e_i,k} \label{eqn: New Maximum} 
        \end{align}
where in  $(a)$ \begin{equation}
    n_{e_i,k} \coloneqq \sum_{t=1}^{T} \mathds{1} \left\{ k \in y(t) ,\pi^{y(t)}(k)=e_i,  \nonumber \\
    n_k(t-1) \leqslant \frac{M}{\Delta_{e_i, k}^2} \right\}
\end{equation}  and 
$M = 8 U^2\left( 1 - \bar{w}^{\text{rec}}\right)^{1/\eta} \log T + 2 U^2\alpha \bar{w}^{\text{rec}}$. From the above definition of $n_{e_i,k}$  we observe that
it is the number of times content $k$ is selected,
$\pi^{y(t)}(k) = e_i$,
 and $n_k(t - 1) \leqslant M/\Delta_{e_i, k}^2$. Moreover, $\sum_{i = 1}^{j} n_{e_i, k}$ is the number of times content $k$ selected and
 \[
 n_k(t - 1) \leqslant \min_{i \in \{1, 2, \ldots,j\}} \frac{M}{\Delta_{e_i, k}^2}. 
 \]
\remove{
&\stackrel{(a)}{\leqslant} \left [ \Delta_{e_1,k } \frac{1}{\Delta_{e_1,k }^2}+  \sum_{i=2}^{C}  \Delta_{e_i,k } \left( 
 \frac{1}{\Delta_{e_i,k }^2} - \frac{1}{\Delta_{e_{i-1},k }^2}\right)            \right]  \times \nonumber \\
 & \quad \quad \quad \quad \quad \quad \left(8\left( 1 - \bar{w}^{\text{rec}}\right)^{1/\eta} \log T + 2 \alpha \bar{w}^{\text{rec}} \right) \nonumber\\
& \stackrel{(b)}{\leqslant} \frac{2}{\Delta_{\min,k}} \left[8\left( 1 - \bar{w}^{\text{rec}}\right)^{1/\eta} \log T + 2 \alpha \bar{w}^{\text{rec}}\right] \nonumber\\
&= \frac{16\left( 1 - \bar{w}^{\text{rec}}\right)^{1/\eta} \log T }{ \Delta_{\min,k}} + \frac{4\alpha \bar{w}^{\text{rec}}}{\Delta_{\min,k}}, \label{Regret 2}
}
Therefore 
\[
     \sum_{i = 1}^{j} n_{e_i, k}\leq \frac{M}{\Delta_{e_j, k}^2} 
\]
for all $j \in \{1,2,\cdots, C\}$. Since $\Delta_{e_1,k} \geq \Delta_{e_2,k} \geq ,\cdots, \geq \Delta_{e_{C},k}$, we can set $n_{e_i, k}$ for all $i \in \{1,2,\cdots,C\}$ as follows to maximize~\eqref{eqn: New Maximum}.
 \[
n_{e_1, k} = \frac{M}{\Delta_{e_1, k}^2}\]
and for $i \in \{2,\cdots,C\}$, 
 \[
n_{e_i, k} = \frac{M}{\Delta_{e_{i-1}, k}^2} - \frac{M}{\Delta_{e_i, k}^2}.
\]   
\remove{
\begin{align*}
n_{e_2, k} &= \frac{M}{\Delta_{e_1, k}^2} - \frac{M}{\Delta_{e_1, k}^2} \\
    &\vdots \\
    &\vdots \\
    n_{e_{C-1}, k} &= \frac{M}{\Delta_{e_{C-1}, k}^2} - \frac{M}{\Delta_{e_{C}, k}^2} \\
    n_{e_{C}, k} &= \frac{M}{\Delta_{e_{C-1}, k}^2} - \frac{M}{\Delta_{e_{C}, k}^2}
\end{align*}
}
Hence \begin{align*}
    &\max_{n_{e_1}, n_{e_2} \dots,n_{e_C}} \sum_{i=1}^C n_{e_i,k} \Delta_{e_i,k} \nonumber \\   
    & = M\left [ \Delta_{e_1,k } \frac{1}{\Delta_{e_1,k }^2}+  \sum_{i=2}^{C}  \Delta_{e_i,k } \left( 
 \frac{1}{\Delta_{e_i,k }^2} - \frac{1}{\Delta_{e_{i-1},k }^2}\right)            \right].
\end{align*}
Therefore~\eqref{eqn: Kveton} is bounded as follows 
\begin{align*}
    & \sum_{e \in \mathcal{Y}} \Delta_{e,k} \mathbb{E} \bigg[ \sum_{t=1}^{T} \mathds{1}_{e, k}(t) \mathds{1}\left\{n_k(t-1) \leqslant l_{e, k}\right\} \bigg] \nonumber \\
    &\leq  M\left [ \Delta_{e_1,k } \frac{1}{\Delta_{e_1,k }^2}+  \sum_{i=2}^{C}  \Delta_{e_i,k } \left( 
 \frac{1}{\Delta_{e_i,k }^2} - \frac{1}{\Delta_{e_{i-1},k }^2}\right)            \right].
\end{align*}
Finally, we use following  lemma to get a tighter bound.
\begin{lemma}\label{Lemma 2}~\cite[Lemma~3]{DBLP:journals/corr/KvetonWAEE14} Let $\Delta_1 \geq \Delta_2 \geq ,...\geq \Delta_K$ be a sequence of $K$ positive numbers. Then
\begin{equation}
    \left[ \frac{1}{\Delta_{1}}+  \sum_{k=2}^{K}  \Delta_{k} \left( 
 \frac{1}{\Delta_{k}^2} - \frac{1}{\Delta_{k-1}^2}\right)     \right] \leq \frac{2}{\Delta_K}. \nonumber
\end{equation}
\end{lemma}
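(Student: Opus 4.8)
The plan is to bound each summand $\Delta_k\left(\frac{1}{\Delta_k^2} - \frac{1}{\Delta_{k-1}^2}\right)$ by a telescoping quantity and then sum over $k$. First I would factor the difference of reciprocal squares as
\[
\frac{1}{\Delta_k^2} - \frac{1}{\Delta_{k-1}^2} = \left(\frac{1}{\Delta_k} - \frac{1}{\Delta_{k-1}}\right)\left(\frac{1}{\Delta_k} + \frac{1}{\Delta_{k-1}}\right).
\]
The key step is to exploit the monotonicity assumption: since $\Delta_{k-1} \geq \Delta_k > 0$ we have $\frac{1}{\Delta_{k-1}} \leq \frac{1}{\Delta_k}$, and therefore $\frac{1}{\Delta_k} + \frac{1}{\Delta_{k-1}} \leq \frac{2}{\Delta_k}$. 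Because the first factor $\frac{1}{\Delta_k} - \frac{1}{\Delta_{k-1}}$ is nonnegative (again by monotonicity), multiplying through by $\Delta_k$ and bounding the second factor preserves the inequality direction and yields
\[
\Delta_k\left(\frac{1}{\Delta_k^2} - \frac{1}{\Delta_{k-1}^2}\right) \leq 2\left(\frac{1}{\Delta_k} - \frac{1}{\Delta_{k-1}}\right).
\]

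Next I would sum this estimate over $k = 2, \ldots, K$. The right-hand side telescopes cleanly to $2\left(\frac{1}{\Delta_K} - \frac{1}{\Delta_1}\right)$. Adding back the leading term $\frac{1}{\Delta_1}$ then gives
\[
\frac{1}{\Delta_1} + \sum_{k=2}^{K}\Delta_k\left(\frac{1}{\Delta_k^2} - \frac{1}{\Delta_{k-1}^2}\right) \leq \frac{1}{\Delta_1} + 2\left(\frac{1}{\Delta_K} - \frac{1}{\Delta_1}\right) = \frac{2}{\Delta_K} - \frac{1}{\Delta_1} \leq \frac{2}{\Delta_K},
\]
where the final inequality simply drops the nonnegative term $\frac{1}{\Delta_1}$. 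This establishes the claimed bound.

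The only real content — and the step I expect to carry the proof — is the factoring together with the monotonicity estimate $\frac{1}{\Delta_k} + \frac{1}{\Delta_{k-1}} \leq \frac{2}{\Delta_k}$, which is exactly what converts each otherwise non-telescoping term into a telescoping one; everything after that is a routine summation and a sign observation. Since this statement is quoted as \cite[Lemma~3]{DBLP:journals/corr/KvetonWAEE14}, one could alternatively defer entirely to the reference, but the two-line argument above is self-contained and elementary.
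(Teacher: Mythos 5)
Your proof is correct, and the first thing to note is that the paper itself contains \emph{no} proof of this lemma: the statement is imported verbatim as Lemma~3 of the cited reference (Kveton et al.), and that citation is the paper's entire justification. Your argument is therefore a self-contained replacement for the citation rather than a reconstruction of anything in the paper, and every step checks out: the difference-of-squares factoring, the monotonicity bound $\frac{1}{\Delta_k}+\frac{1}{\Delta_{k-1}}\leq\frac{2}{\Delta_k}$ applied to the nonnegative factor $\frac{1}{\Delta_k}-\frac{1}{\Delta_{k-1}}$, the telescoping sum equal to $2\left(\frac{1}{\Delta_K}-\frac{1}{\Delta_1}\right)$, and discarding the leftover $\frac{1}{\Delta_1}\geq 0$ at the end. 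For comparison, the cited reference reaches the same constant by a slightly different elementary route: it first regroups the left-hand side (summation by parts) into $\frac{1}{\Delta_K}+\sum_{k=1}^{K-1}\frac{\Delta_k-\Delta_{k+1}}{\Delta_k^2}$, then bounds each summand by $\frac{1}{\Delta_{k+1}}-\frac{1}{\Delta_k}$ via comparison with $\int_{\Delta_{k+1}}^{\Delta_k}x^{-2}\,dx$, and telescopes. The two arguments are of essentially equal length and strength; yours bounds each term before any rearrangement, the reference rearranges before bounding, and neither requires distinctness or strict monotonicity of the $\Delta_k$'s. Either one suffices for the only use the paper makes of the result, namely collapsing the final expression in the regret analysis to $2M/\Delta_{e_C,k}$.
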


Using Lemma~\ref{Lemma 2} we get
\begin{align*}
    \sum_{e \in \mathcal{Y}} \Delta_{e,k}&\mathbb{E} \bigg[ \sum_{t=1}^{T} \mathds{1}_{e, k}(t) \mathds{1}\left\{n_k(t-1) \leqslant l_{e, k}\right\} \bigg] \nonumber \\
    &{\leqslant}  \frac{2M}{\Delta_{e_C,k}} \\
    &= \frac{16 U^2 \left( 1 - \bar{w}^{\text{rec}}\right)^{1/\eta} \log T }{ \Delta_{\min,k}} + \frac{4 U^2 \alpha \bar{w}^{\text{rec}}}{\Delta_{\min,k}}.
\end{align*}
\end{proof}

\end{document}